\documentclass{article}




\PassOptionsToPackage{numbers, sort, compress}{natbib}
  \usepackage[preprint]{neurips_2019}



\usepackage[utf8]{inputenc} 
\usepackage[T1]{fontenc}    
\usepackage{hyperref}       
\hypersetup{
     colorlinks   = true,
     allcolors    = black
}
\usepackage{url}            
\usepackage{booktabs}       
\usepackage{amsfonts}       
\usepackage{nicefrac}       
\usepackage{microtype}      
\usepackage{algorithmic}

\RequirePackage{amsthm, mathtools, amsmath, amssymb, bm, tikz, tabu}
\usepackage[ruled]{algorithm2e}
\usepackage{graphicx}
\usepackage[bottom]{footmisc}
\usepackage{subfig, lipsum}

\usepackage{enumitem,xcolor}
\newcommand{\oracle}{\text{best response oracle }}
\newcommand\define{\;\stackrel{\mathclap{\normalfont\scriptsize\mbox{def}}}{=}\;}

\DeclareMathOperator*{\argmax}{arg\,max}
\DeclareMathOperator*{\sign}{sign}
\newcommand{\<}{\langle}
\renewcommand{\>}{\rangle}
\newcommand{\R}{\mathbf{R}}

\newcommand{\C}{\mathcal{C}}
\newcommand{\T}{\mathcal{T}}
\newcommand{\E}{\mathop{\mathbb{E}}}

\newcommand{\q}{\mathbf{q}} 
\newcommand{\p}{\mathbf{p}} 

\newcommand{\lz}{\ell_{\text{0-1}}}
\newcommand{\lr}{\ell_r}

\newcommand{\lut}{\ell_{ut}}
\newcommand{\Mz}{M_{0\text{-}1}}

\newcommand{\Mut}{M_{ut}}

\providecommand{\customgenericname}{}
\newcommand{\newcustomtheorem}[2]{%
  \newenvironment{#1}[1]
  {%
   \renewcommand\customgenericname{#2}%
   \renewcommand\theinnercustomgeneric{##1}%
   \innercustomgeneric
  }
  {\endinnercustomgeneric}
}

\newcustomtheorem{customtheorem}{Theorem}
\newcustomtheorem{customlemma}{Lemma}
\newcustomtheorem{customproposition}{Proposition}
\newcustomtheorem{customcorollary}{Corollary}
\usepackage{printlen}

\newtheorem{theorem}{Theorem}
\newtheorem{lemma}{Lemma}

\newtheorem{corollary}{Corollary}

\usepackage{hyperref}
\hypersetup{colorlinks=true,linkcolor=blue,citecolor=blue,urlcolor=blue}


\title{Robust Attacks against Multiple Classifiers}

%

\author{%
  Juan C. Perdomo \\
  University of California, Berkeley\\
  \texttt{jcperdomo@berkeley.edu} \\
 \And
 Yaron Singer \\
 Harvard University\\
 \texttt{yaron@seas.harvard.edu} \\
 }

\begin{document}
\maketitle
\vspace{-13pt}
\begin{abstract}
We address the challenge of designing optimal adversarial noise algorithms for settings where a learner has access to multiple classifiers. We demonstrate how this problem can be framed as finding strategies at equilibrium in a two-player, zero-sum game between a learner and an adversary. In doing so, we illustrate the need for randomization in adversarial attacks. In order to compute Nash equilibrium, our main technical focus is on the design of best response oracles that can then be implemented within a Multiplicative Weights Update framework to boost deterministic perturbations against a set of models into optimal mixed strategies. We demonstrate the practical effectiveness of our approach on a series of image classification tasks using both linear classifiers and deep neural networks.
\end{abstract}
\vspace{-5pt}
\section{Introduction}
\label{sec:intro}

In recent years there has been a growing concern regarding the sensitivity of learning algorithms to noise and their general stability. State-of-the-art classifiers that achieve or even surpass human level performance can be reliably fooled by perturbing inputs with an imperceptible amount of noise~\cite{szegedy, fgm, deepfool, momentummethod}. To evaluate classifiers' robustness and improve their reliability, the study of adversarial noise has become a central focus in machine learning~\cite{datapoisoningcertified, KL17,raghunathan2018certified,yosinki,ilyas,physicalworld}.

One of the most active areas of research within adversarial noise, has been the design of adversarial attacks against a \emph{single classifier} (e.g.~\citep{szegedy, fgm, deepfool, universaladversarial, momentummethod, carlini}).  Given a data point and a classifier, the goal of these algorithms is to find the perturbation of minimum norm that, when added to the data, induces the classifier to make the wrong prediction.  Adversarial attacks have gained a great deal of attention as they inform the design of robust models and test the robustness of existing models.

A common strategy for robust classification is to randomize decisions across multiple classifiers.  This approach is used in \emph{gradient boosting}~\cite{F00,F02} as well as for boosting linear classifiers~\cite{BHKM15} and neural networks~\cite{robertpaper}.  An attack designed to fool a single model as in~\citep{szegedy, fgm, deepfool, universaladversarial, momentummethod, carlini} may do poorly on another model and therefore randomizing between models is a reasonable defense strategy. 

 
In this paper, we study adversarial attacks against a learner that randomizes between multiple classifiers.  In particular, we consider the design of \emph{provably optimal} adversarial attacks against a set of classifiers. Relative to attacks against a single classifier, characterizing the optimal attack against multiple classifiers is significantly more challenging. Classifiers that achieve the same accuracy can have drastically different decision boundaries. Hence, a perturbation that fools one model may be completely ineffective on another (See example in Figure \ref{fig:randomization}). Heuristically, one may design an attack on the \emph{average} of multiple classifiers, yet such an attack does may be arbitrarily ineffective. We therefore consider a robust optimization approach and define an optimal attack against a set of classifiers as noise that minimizes the maximum accuracy of \emph{all} classifiers in that set.

\begin{figure*}[t!]
\begin{center}
\includegraphics[width=\textwidth]{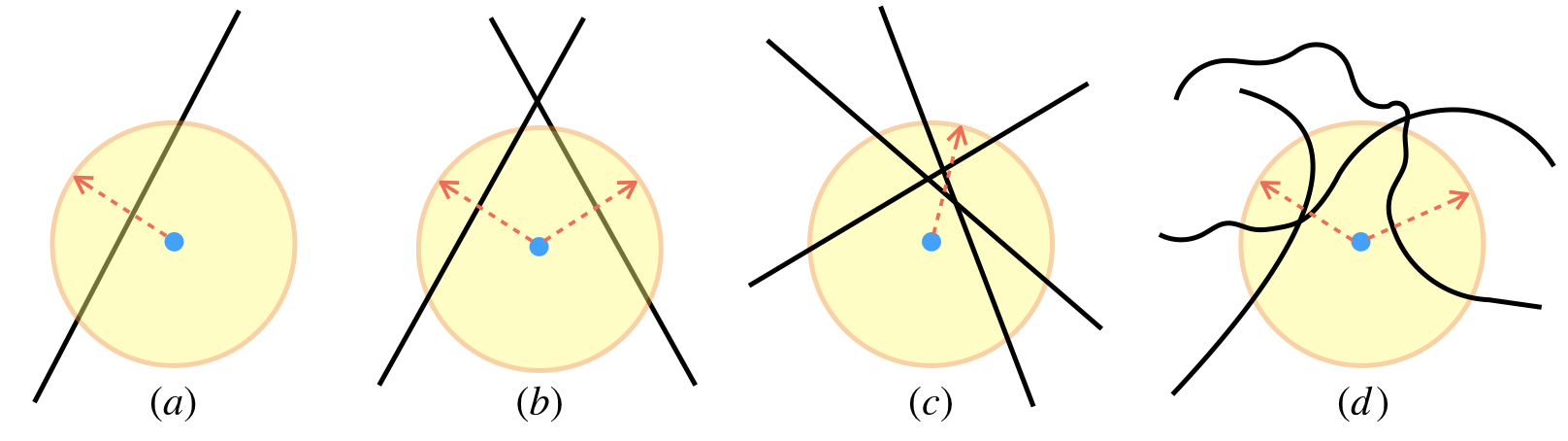}
\end{center}
\caption{Starting on the left, the optimal attack against a single classifier consists of finding a noise vector that pushes the point past the decision boundary. In the case of multiple classifiers, while we can sometimes find perturbations that fool all classifiers simultaneously as seen in $(c)$, attacks that fool one classifier may be ineffective on others. Hence, we must randomize across noise vectors to generate robust attacks as in $(b)$. In $(d)$, we see how this phenomenon extends to nonlinear classifiers.}
\label{fig:randomization}
\vspace{-15pt}
\end{figure*}

We present a principled approach for attacking a set of classifiers which proves to be highly effective for both linear models and deep neural networks. We show that constructing optimal attacks against multiple classifiers is equivalent to finding strategies at equilibrium in a zero-sum game between a learner who selects classifiers from a set, and an adversary that adds (bounded) noise to data points. 

\textbf{Contributions.} It is well known that the Nash equilibria of zero-sum games can be efficiently obtained by applying the celebrated Multiplicative Weights Update algorithm, \emph{if there exists an oracle that computes a best response} to a randomized strategy ~\cite{FreundSchapire, FreundSchapireGamePlaying, kale}. The main technical challenge we address pertains to the design and implementation of these best response oracles in the context of adversarial noise. More generally, our contributions are as follows:
\begin{itemize}[nolistsep]
\item We show how computing optimal attacks against multiple classifiers can be reduced to the task of designing best response oracles within the MWU framework (Section \ref{sec:model});

\item We provide a geometric characterization of the optimization problem of finding best responses for a general set of classifiers and demonstrate how to construct an exact oracle via convex optimization for the case when the set consists of linear classifiers (Section \ref{sec:characterization});

\item We show that when the number of classifiers is super-constant in the dimension of the data, constructing an optimal attack is in general NP-hard even for binary linear classifiers.  To address this, we design a convex relaxation and prove it is guaranteed to return a solution arbitrarily close to optimal in polynomial time under natural conditions (Section \ref{sec:at_scale});

\item We generalize our approach to neural networks (Section \ref{sec:deep_learning}) and empirically validate the efficacy of this principled approach via a series of image classification experiments. We demonstrate how there is a large gap in performance between our methods and existing state-of-the-art approaches when attacking a set of classifiers (Section  \ref{sec:experiments}).
\end{itemize}

Similar to most work on adversarial noise (e.g. ~\citep{szegedy, fgm, deepfool, universaladversarial, momentummethod, carlini}), we assume that the attacker has full knowledge of the classifiers being attacked.  From the perspective of informing the design of robust classifiers, this is an important assumption to make if we wish to certify the robustness of a learning strategy against a worst case adversary.  Practically speaking, this is a sensible consideration since practitioners often use off-the-shelf classifiers that are accessible to attackers.  For many sophisticated machine learning tasks, training state-of-the-art classifiers, such as neural networks, is an expensive endeavor which requires significant experience and tuning. Instead of training their own models, it has become increasingly common for practitioners to make use of pre-trained classifiers that are available online. Most importantly however, the design of optimal attacks against a set of classifiers is a basic theoretical question in robust machine learning that so far has not been addressed. 

\textbf{Related Work.} Our work builds upon a rich literature of adversarial noise and robust optimization in machine learning \citep{szegedy, adversarialgeneralization, computationalconstraints, madry2018towards, shaishalev, duchi}. More specifically, our results on designing attacks that robustly optimize over a set of classifiers can be seen as complementary to that of \citet{robertpaper}. In their paper, they demonstrate how given a  Bayesian oracle that returns an $\varepsilon$-approximate solution for distributions over a set of objectives, one can then compute a distribution over solutions that is $\varepsilon$-approximate in the worst case.  While they abstract away the optimization problem into the existence of an oracle, our main focus is precisely the design of efficient algorithms to compute best responses in the context of adversarial noise. Within the security literature, \citet{song} motivate the use of ensembles as a heuristic approach to attack deep learning classifiers in the context of black box attacks. Yet, the problem setting they consider is significantly different since they are interested in attacking a single unknown classifier rather that robustly optimizing over an entire set.

\section{Optimal Attacks against a Set of Classifiers are a Zero-Sum Game}
\label{sec:model}

To simplify our presentation, we describe the optimal attack on a \emph{single} point $(x,y) \in \R^d \times [k]$, where $\R^d$ is the input space and $[k]$ is the set of labels. This is without loss of generality since each point in a data set $\{(x_j,y_j)\}_{j=1}^m$ can be perturbed independently, and the optimal attack on $\{(x_j,y_j)\}_{j=1}^m$ consists of the $m$ optimal attacks on each point individually.
 
A \emph{deterministic adversarial attack} is a single vector $v \in \R^d$. A distribution $\q$ is a \emph{randomized adversarial attack} if $\q$ is a probability distribution over a set of deterministic attacks $\{v_1, \dots, v_t\}$. We say that an attack is $\varepsilon$-\emph{bounded} if all vectors $v_i$ that have non-zero probability under $\q$ have bounded norm, i.e. $||v_i||_p \leq \varepsilon$. We describe the case for the $\ell_2$ norm, however, our results can be easily extended to a variety of norms, including the $\ell_\infty$ norm (see Appendix \ref{sec:ell_infty}).

For a given classifier $c:\R^d \to [k]$, a deterministic adversarial attack $v$ induces misclassification on $(x, y)$ if $c(x + v) \neq y$. Given a finite set of $n$ classifiers $\C$, an \emph{optimal adversarial attack} on a pair $(x,y)$ is a distribution $\q$ over noise vectors that maximizes the minimum 0-1 loss of the models in $\C$:
\begin{equation}
\label{eq:game}
	 \arg\max_\q \min_{c\in \C}  \E_{v \sim \q} \left[ \lz(c, x + v, y) \right]
\end{equation}
This objective describes the optimal adversarial attack because it has the property that the adversary is \emph{indifferent} as to the classifier chosen by the learner. As we later illustrate both empirically and theoretically, designing attacks against classifiers that are chosen uniformly at random, or even the ensemble of all models, provides no guarantees that there exists a classifier in the set which achieves perfect accuracy. Therefore, the optimal attack for an adversary which faces uncertainty as to the classifier ultimately chosen over the learner is to robustly optimize over the entire set of possibilities.

\textbf{Optimal attacks are equilibrium strategies.} Attacking a set of classifiers can be modeled as a zero-sum game between a learner who selects classifiers $c \in C$ and an adversary that chooses noise vectors $v \in \R^d$, where $||v||_2 \leq \varepsilon$. In addition to pure strategies, players can opt to play randomized strategies. The learner can choose a distribution $\p$ over the set $\C$ and the adversary can select an $\varepsilon$-bounded randomized attack $\q$. Randomization is a necessary property of the model since deterministic attacks are limited in their power to induce misclassification across multiple classifiers as seen in Figure \ref{fig:randomization}. As mentioned previously, the game is played over a single example $(x,y)$. We define the payout function of the game $\Mz (\p, \q)$ as the expected 0-1 loss of the learner:\footnote{
Later on, we modify the game so that it is played over other loss functions. Hence, while $\Mz$ denotes the expected loss of the learner under the 0-1 loss, we let $M_\ell$ denote the expected loss of the learner under an arbitrary loss function $\ell$. We overload notation so that the payoff function $M_\ell(\cdot, \cdot)$ accepts distributions $\p,\q$ as well as pure strategies $c\in \C,v\in \R^d$.}
\begin{equation}
\label{eq:payout}
	   \Mz(\p, \q) \define \E_{c \sim \p, v \sim \q} \left[ \lz(c, x + v, y) \right]
\end{equation}
In this presentation, the learner tries to minimize payouts while the adversary maximizes. The Nash equilibrium of the game is a pair of strategies $\p^\star,\q^\star$ such that the following relationship holds:
\begin{equation}
\min_{c \in \C} \Mz(c, \q^\star) = \max_{v \in \R^d} \Mz(\p^\star, v) = \lambda
\end{equation}
\textbf{Computing optimal adversarial attacks via MWU.} Since the seminal result of Freund and Shapire~\cite{FreundSchapire, FreundSchapireGamePlaying}, it is well known that Multiplicative Weight Updates as described in Algorithm \ref{alg:MWU} can be used to efficiently compute equilibrium strategies of zero-sum games \emph{assuming access to a best response oracle} that returns the optimal pure strategy (best deterministic attack) for any distribution over $\C$.  The main focus of our paper is how to design such best response oracles that enable the implementation of MWU and in doing so allow us to compute optimal attacks. 

\begin{equation}
\label{eq:best_response}
	\textsc{best response}(\p, \varepsilon, M_\ell) \define \underset{v \in \R^d, \; ||v||_2 \leq \varepsilon}{\argmax} M_\ell(\p, v)
\end{equation}
The MWU algorithm computes distributions $\p^\star, \q^\star$ that are within $\delta$ of the equilibrium value of the game $\lambda = \min_\p \; \max_\q \; \Mz(\p,\q)$ using $\mathcal{O}(\frac{\ln n}{\delta^2})$ iterations. \footnote{In practice, the algorithm converges in far fewer iterations as we show through our experiments in  Section \ref{sec:experiments}. We analyze the convergence of the MWU algorithm in Appendix~\ref{sec:mwu_convergence}.} In this work, we focus on developing attacks on sets of neural networks and linear models. However, our framework can be used to generate optimal attacks for any domain in which one can approximate a best response.
\makeatletter
\renewcommand{\@algocf@capt@plain}{above}
\makeatother
\begin{algorithm}[t!]
  \caption{Multiplicative Weight Updates for Optimal Noise}
\begin{algorithmic}
   \STATE {\bfseries Input:} $\C=\{c_i\}_{i=i}^n$, point $(x,y)$, parameters $\varepsilon$, $T$, $\beta$, payoff function $M_\ell(\cdot, \cdot)$
   \STATE \textbf{initialize} $\mathbf{p}_1=(\frac{1}{n},\ldots,\frac{1}{n})$;    \FOR{$t=1$ {\bfseries to} $T$}
   \STATE Set $v_t = \textsc{best response}(\p_t, \varepsilon, M_\ell)$
   \STATE Set $\p_{t+1}[i] \propto \p_{t}[i](1 - \beta)^{M_\ell(c_i, v_t)}$ for every $i \in [n]$
   \ENDFOR
   \STATE {\bfseries Return:} uniform distributions $\p^\star$ over $\{\p_1,\dots, \p_T\}$, $\q^\star$ over $\{v_1, \dots, v_T\}$
\end{algorithmic}
   \label{alg:MWU}
\end{algorithm}

\section{Designing Best Response Oracles for Adversarial Attacks}
\vspace{-5pt}
\label{sec:characterization}
In this section we present our main technical results: the characterization and implementation of best response oracles for adversarial noise. We begin by characterizing the optimization problem for a general set of classifiers under the 0-1 loss. Then, we provide a more refined analysis of the underlying geometry for the case where the set $\C$ is composed of linear models. This refined analysis is the fundamental insight that guides the design of  algorithms for optimal attacks.  Lastly, we present the central result of the section, the existence of an exact best response oracle for linear classifiers.

\textbf{Geometry of best response oracles.} The key observation that allows for the design of best response oracles is that the optimization problem described in Equation \ref{eq:best_response} can be solved by searching over finitely many regions. When the learner selects a distribution over a finite set of classifiers, her loss $\Mz(\p,\cdot)$ can only assume finitely many values, each of which is associated with a particular region $T_j \subset \R^d$. Finding the optimal response then consists of finding points in each region and choosing the one with the highest associated loss. 
\begin{lemma}
\label{lemma:characterization}
Given a point $(x,y) \in \R^d \times [k]$, selecting a distribution $\p$ over a set $\C$ of $n$ classifiers partitions the input space $\R^d$ into $k^n$ disjoint sets $T_j$ such that:
\begin{enumerate}[nolistsep]
	\item For each $T_j$, there exists a unique label vector $s_j \in [k]^n$ such that for all $v$ with $x + v \in T_j$ and $c_i \in \C$, $c_i(x + v) = s_{j,i}$, where $s_{j,i}$ is a particular label in $[k]$.
	\item There exists a finite set of numbers $a_1, \dots a_{k^n}$, not necessarily all unique, such that $\Mz(\p,v) = a_j$ for all noise vectors $v$ such that $x+v \in T_j$.
\end{enumerate}
\end{lemma}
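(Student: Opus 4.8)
The plan is to realize the $T_j$ as the cells of the common refinement of the $n$ label‑partitions induced by the individual classifiers, and then to observe that the learner's expected $0$-$1$ loss is constant on each cell because it depends on $x+v$ only through the vector of predicted labels $(c_1(x+v),\dots,c_n(x+v))$.

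First I would fix an enumeration $s_1,\dots,s_{k^n}$ of the label vectors in $[k]^n$ and define, for each $j$,
\[
T_j \define \{\, z \in \R^d : c_i(z) = s_{j,i}\ \text{for all}\ i \in [n] \,\},
\]
i.e.\ $T_j = \bigcap_{i=1}^n c_i^{-1}(s_{j,i})$. Since each $c_i$ assigns exactly one label in $[k]$ to every point, the preimages $c_i^{-1}(1),\dots,c_i^{-1}(k)$ partition $\R^d$ for every fixed $i$; intersecting one block from each of these $n$ partitions yields exactly the family $\{T_j\}_{j=1}^{k^n}$, with the convention that we retain all $k^n$ index values, some of the corresponding sets possibly being empty, so that the count is precisely $k^n$. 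Note this partition is determined by $\C$ alone and does not really depend on $\p$ --- only the loss values attached to the cells will.

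Next I would verify the two claims. Disjointness: if $j\neq j'$ then $s_j$ and $s_{j'}$ disagree in some coordinate $i$, so any $z\in T_j$ has $c_i(z)=s_{j,i}\neq s_{j',i}$ and hence $z\notin T_{j'}$. Covering: any $z\in\R^d$ lies in $T_j$ for the unique $j$ with $s_j = (c_1(z),\dots,c_n(z))$. Thus $\{T_j\}$ is a partition of $\R^d$, and by construction $s_j$ is the (unique, on any nonempty cell) vector of labels realized by the classifiers on $T_j$, which is item~1. For item~2, fix any $v$ with $x+v\in T_j$; then $c_i(x+v)=s_{j,i}$ for all $i$, so
\[
\Mz(\p,v) \;=\; \sum_{i=1}^n \p[i]\,\lz(c_i, x+v, y) \;=\; \sum_{i=1}^n \p[i]\,\mathbf{1}[\,s_{j,i}\neq y\,],
\]
and setting $a_j$ equal to this last sum gives $\Mz(\p,v)=a_j$ for every $v$ with $x+v\in T_j$. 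There are $k^n$ such numbers, not necessarily distinct, since distinct label vectors can produce the same $\p$-weighted mistake count.

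I do not expect a genuine obstacle here --- the argument is essentially bookkeeping about a product partition. The only points needing a little care are (i) indexing the cells by all of $[k]^n$ so the stated count of $k^n$ is exact even when some intersections are empty, and (ii) reading ``unique label vector $s_j$'' as the label vector realized on a nonempty $T_j$, so that vacuity on empty cells does not conflict with the uniqueness claim.
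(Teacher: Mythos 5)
Your proposal is correct and follows essentially the same route as the paper: define each $T_j$ by the label vector $(c_1(x+v),\dots,c_n(x+v))$ it realizes, note the bijection with $[k]^n$, and observe that $\Mz(\p,v)=\sum_i \p[i]\,\lz(c_i,x+v,y)$ depends only on that label vector, hence is constant on each cell. Your extra bookkeeping (explicit disjointness/coverage and the convention that empty cells are retained so the count is exactly $k^n$) is a slightly more careful statement of the same argument.
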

\begin{proof} We define each set $T_j$ according to the predictions of the classifiers $c_i \in \C$ on the perturbed points $x + v$ that lie within $T_j$. In particular, each region $T_j$ is associated with a unique label vector $s_j \in [k]^n$ such that  $c_i(x + v) = s_{j,i}$ for all $c_i \in \C$. This relationship defines a bijection between sets $T_j$ and label vectors $s_j$. Since the predictions of each model are the same for all points in a particular set, the expected loss  $\Mz(\p,v)$=$\sum_{i \in [n]} \p[i] \lz(c_i, x + v, y)$ is constant for all points $x+v$ that lie in that set. Since there are finitely many regions, the loss can only assume finitely many values.
\end{proof}
\vspace{-10pt}
Lemma~\ref{lemma:characterization} shows that if we can design an algorithm to find points in each region $T_j$, we can compute a best response. Next, we show that the design of such an algorithm relies crucially on the geometry induced by the classifiers we wish to attack. In particular, when the set of classifiers consists of linear models, the regions $T_j$ are not only finite, they have the added benefit of also being \emph{convex}.
\begin{lemma}
\label{lemma:convex_sets}(Informal)
Given a point $(x,y) \in \R^d \times [k]$ selecting a distribution $\p$ over a set $\C$ of $n$ linear classifiers, partitions the input space $\R^d$ into $k^n$ disjoint and convex sets $T_j$.
\end{lemma}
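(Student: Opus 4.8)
The plan is to make precise the standard model of a linear multiclass classifier and then observe that the regions $T_j$ furnished by Lemma~\ref{lemma:characterization} are finite intersections of halfspaces. Write each $c_i \in \C$ as $c_i(z) = \argmax_{\ell \in [k]} \big(\langle \w_{i,\ell}, z\rangle + b_{i,\ell}\big)$ with a fixed tie-breaking rule (say, smallest index wins). The disjointness, the bijection with label vectors $s_j \in [k]^n$, and the count $k^n$ are already given by Lemma~\ref{lemma:characterization}; the only new claim to establish is convexity of each piece.

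First I would fix a label vector $s \in [k]^n$ and look at the corresponding region $T_s = \{x+v : c_i(x+v) = s_i \text{ for all } i \in [n]\}$. For a single classifier $c_i$, the event $c_i(z) = s_i$ is equivalent to the system of linear inequalities $\langle \w_{i,s_i} - \w_{i,\ell},\, z\rangle + (b_{i,s_i} - b_{i,\ell}) \geq 0$ for every $\ell > s_i$ together with $\langle \w_{i,s_i} - \w_{i,\ell},\, z\rangle + (b_{i,s_i} - b_{i,\ell}) > 0$ for every $\ell < s_i$, where the split into non-strict and strict reflects the tie-breaking convention. A set cut out by finitely many linear inequalities, whether strict or not, is an intersection of (open or closed) halfspaces and is therefore convex; hence $\{z : c_i(z) = s_i\}$ is convex for each $i$. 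Then $T_s = \bigcap_{i \in [n]} \{z : c_i(z) = s_i\}$ is an intersection of $n$ convex sets and so is convex, and translating by $-x$ (passing from $z = x+v$ to $v$) preserves convexity, so the admissible noise region is convex as well. Combining with Lemma~\ref{lemma:characterization} yields the partition of $\R^d$ into $k^n$ disjoint convex sets, as claimed.

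The main obstacle — really the only subtlety — is the treatment of the measure-zero boundary set where the $\argmax$ is not unique: one must commit to a tie-breaking rule and verify that it produces regions described by a consistent mixture of strict and non-strict linear inequalities, so that the $T_j$ genuinely tile $\R^d$ and each remains convex, rather than assigning a shared boundary point to two regions or to a nonconvex union. Once this convention is pinned down the argument is routine, since convexity is insensitive to whether the defining halfspaces are open or closed.
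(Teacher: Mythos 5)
Your proof is correct and rests on the same core observation as the paper's: each region is cut out by finitely many linear inequalities and is therefore convex, with disjointness and the count $k^n$ inherited from Lemma~\ref{lemma:characterization}. The only real difference is in how ties are handled. The paper does not introduce a tie-breaking rule: it defines each $T_j$ by strict inequalities $c_{i,z}(x') > c_{i,\ell}(x')$, verifies convexity by the explicit convex-combination computation, and then concedes that the tie set is left out of all regions — its formal statement in the appendix adds ``$\R^d \setminus \bigcup_j T_j$ is a set of measure zero,'' so the $T_j$ cover $\R^d$ only up to a null set (consistent with the strict constraints used in the oracle of Theorem~\ref{theorem:multi_oracle}). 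You instead fix a tie-breaking convention, which makes each $\{z : c_i(z)=s_i\}$ a mixture of open and closed halfspaces and yields a genuine tiling of $\R^d$; convexity then follows from the general fact that intersections of (open or closed) halfspaces are convex, rather than from a line-segment calculation. Both readings validly formalize the informal lemma: yours buys an exact partition at the cost of a convention on the classifier definition, while the paper's buys simpler region descriptions at the cost of discarding a measure-zero boundary.
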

This insight allows us to compute best responses via a reduction to convex programming as we present in Theorem \ref{theorem:multi_oracle}. The proof, presented in the Appendix, introduces the construction of an exact algorithm to find points in each region $T_j$. Having found a vector associated with each region, computing a best response then amounts to selecting the perturbation associated with the highest loss. 
\begin{theorem}
\label{theorem:multi_oracle}
	For linear classifiers, implementing an exact \oracle reduces to the problem of minimizing a quadratic function over a set of $k^n$ convex polytopes.
\end{theorem}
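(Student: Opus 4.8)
The plan is to convert the partition structure of Lemmas~\ref{lemma:characterization} and~\ref{lemma:convex_sets} into an explicit enumerate‑and‑optimize scheme. First I would fix notation for multiclass linear classifiers: write each $c_i \in \C$ as $c_i(z) = \argmax_{\ell \in [k]} \langle w_{i,\ell}, z\rangle + b_{i,\ell}$, so that the decision region $R_{i,\ell} \define \{z \in \R^d : c_i(z) = \ell\}$ is cut out by the $k-1$ linear inequalities $\langle w_{i,\ell} - w_{i,\ell'},\, z\rangle + (b_{i,\ell} - b_{i,\ell'}) \ge 0$ and is therefore a (closed) convex polytope. For a label vector $s \in [k]^n$ the region $T_s \define \bigcap_{i \in [n]} R_{i, s_i}$ is an intersection of $n$ such polytopes, hence itself a convex polytope described by at most $n(k-1)$ linear inequalities; by Lemma~\ref{lemma:convex_sets} the $k^n$ sets $\{T_s\}_{s\in[k]^n}$ cover $\R^d$, and by Lemma~\ref{lemma:characterization} the payoff $M_\ell(\p, v)$ is a constant $a_s$ on each of them (for the $0$--$1$ loss, $a_s = \sum_{i \,:\, s_i \ne y} \p[i]$, the mass placed on classifiers that err in region $T_s$).

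Next I would rewrite the best response objective of Equation~\ref{eq:best_response}. Since the $T_s$ cover the input space,
\[
\max_{\|v\|_2 \le \varepsilon} M_\ell(\p, v) \;=\; \max\Big\{\, a_s \;:\; s \in [k]^n,\ \exists\, v \text{ with } \|v\|_2 \le \varepsilon \text{ and } x+v \in T_s \,\Big\},
\]
and the existential condition for a fixed $s$ is exactly $\min_{z \in T_s}\|z-x\|_2^2 \le \varepsilon^2$, i.e.\ a convex quadratic program (convex quadratic objective, polyhedral feasible set). This yields the claimed reduction: enumerate all $k^n$ label vectors $s$; for each, solve $d_s^2 \define \min_{z \in T_s}\|z-x\|_2^2$ with optimal point $z_s$ (this QP is infeasible precisely when $T_s = \emptyset$); discard every $s$ with $T_s=\emptyset$ or $d_s > \varepsilon$; among the survivors pick $s^\star$ maximizing $a_{s^\star}$; and return $v^\star = z_{s^\star} - x$. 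Correctness is immediate: every region is inspected and each QP is solved exactly, so $a_{s^\star}$ equals the value of Equation~\ref{eq:best_response} and $v^\star$ attains it with $\|v^\star\|_2 = d_{s^\star} \le \varepsilon$. Note that the $\varepsilon$‑ball constraint is enforced by the post‑hoc comparison $d_s \le \varepsilon$ rather than inside the QP, so each of the $k^n$ subproblems is literally ``minimize a quadratic over a convex polytope,'' which is the statement of the theorem.

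Finally, the one point that needs care — and what I expect to be the main obstacle — is the treatment of ties in the $\argmax$ defining $c_i$. With the closed half‑space description, the polytopes $T_s$ overlap on lower‑dimensional boundary faces, so a returned $z_{s^\star}$ lying on such a face is only guaranteed to carry the label vector $s^\star$ under a suitable tie‑breaking convention (or after an arbitrarily small push into the relative interior of $T_{s^\star}$). Relatedly, one should justify that the supremum in Equation~\ref{eq:best_response} is actually attained, which follows from compactness of the $\varepsilon$‑ball around $x$ together with closedness of the finitely many $T_s$. Apart from this boundary bookkeeping, the argument is the routine translation of the finite convex partition into an enumerate‑and‑solve procedure.
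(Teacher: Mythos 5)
Your proposal is correct and follows essentially the same route as the paper: enumerate the $k^n$ label vectors, solve for each the quadratic program that finds the minimum-norm perturbation landing in the corresponding polytope $T_s$, and select among the feasible regions (those with distance at most $\varepsilon$) the one of maximal loss. The only difference is bookkeeping the paper glosses over --- you use closed half-spaces and discuss tie-breaking on boundary faces, whereas the paper writes the region constraints as strict inequalities $c_{i,z}(x+v) > c_{i,j}(x+v)$ --- which does not change the substance of the reduction.
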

\vspace{-15pt}
\section{Computing Optimal Attacks Efficiently}
\vspace{-3pt}
\label{sec:at_scale}
The previous section introduced the key geometric insights necessary to compute best responses. We now analyze the complexity of computing an optimal attack on a set of classifiers in various settings. We show that computing the optimal attack can be done efficiently when the number of classifiers is constant in the input dimension.  In general, however, we show that the problem of designing optimal attacks is NP-hard. We address this challenge by developing a novel algorithmic approach that is based on convex relaxations and is guaranteed to return the optimal solution under natural conditions.

\textbf{Computing optimal attacks efficiently.} For sufficiently rich data distributions, only a small number of classifiers can perform reasonably well.  This assumption is typical within the adversarial examples literature \citep{song, momentummethod, specialists, adversarialtraining, ensembledefenses} where most of the settings considered suppose that the learner has access only to a small \emph{constant} number of classifiers (e.g less than 5). For these settings, when the number of classifiers is constant, a best response is computable in polynomial time.

\begin{corollary}
\label{corollary:constant}
When the number of linear classifiers is constant in the size of the input dimension $d$, the optimal attack on a set of classifiers can be computed in polynomial time.
\end{corollary}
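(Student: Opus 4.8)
The plan is to combine the geometric reduction established in Theorem~\ref{theorem:multi_oracle} with the convergence guarantee of the MWU procedure in Algorithm~\ref{alg:MWU}. The only step of Algorithm~\ref{alg:MWU} whose cost is not immediately polynomial is the call to \textsc{best response} (Equation~\ref{eq:best_response}); the weight update and the final uniform averaging clearly run in time polynomial in $n$ and $T$. So it suffices to (i) show that the best response oracle can be implemented in time polynomial in $d$ when $n$ is constant, and (ii) bound the number of iterations $T$ needed for the desired accuracy.

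First I would unpack the oracle using Lemma~\ref{lemma:convex_sets} and Theorem~\ref{theorem:multi_oracle}. Selecting $\p$ partitions $\R^d$ into $k^n$ convex polytopes $T_j$, on each of which the learner's loss equals a fixed number $a_j$ that is read off directly from $\p$ and the associated label vector $s_j$ (as in the proof of Lemma~\ref{lemma:characterization}). A best response is then obtained by, for each region $T_j$, solving the convex quadratic program $\min\{\|v\|_2^2 : x + v \in T_j\}$, marking $T_j$ as reachable iff this minimum is at most $\varepsilon^2$, and returning the witness $v$ of the reachable region with the largest value $a_j$. Each $T_j$ is cut out by the decision boundaries of the $n$ classifiers, i.e.\ by $\poly(n,k)$ linear inequalities whose coefficients come from the classifiers' weight vectors, so each polytope has a polynomial-size description and minimizing the (convex) squared Euclidean norm over it is a genuine convex QP, solvable to high precision in polynomial time by interior-point or ellipsoid methods. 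Since $n$ is constant, $k^n$ is polynomial in the input size, so the oracle solves polynomially many polynomial-time QPs and runs in polynomial time.

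Next I would assemble the pieces. Instantiating Algorithm~\ref{alg:MWU} with this oracle and running it for $T = \mathcal{O}(\tfrac{\ln n}{\delta^2})$ iterations yields $\p^\star,\q^\star$ within $\delta$ of the equilibrium value $\lambda$; with $n$ constant, $T$ depends only on the target accuracy, so the overall running time is polynomial in $d$ (and in $1/\delta$), producing a $\delta$-approximately optimal attack. If an exact optimum is wanted, I would instead observe that the adversary's payoff depends on $v$ only through which region $T_j$ contains $x+v$; hence the continuous game collapses to a finite matrix game between the $n$ classifiers and the (at most $k^n$) reachable regions, whose value and optimal mixed strategies are obtained exactly by linear programming in polynomial time, again because $k^n$ is polynomial.

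The main obstacle is the oracle step, and specifically the bookkeeping that makes it honestly polynomial: one must confirm that the $T_j$ really do admit polynomial-size linear descriptions and that the per-region minimizations are bona fide convex programs — this is precisely what Lemma~\ref{lemma:convex_sets} and Theorem~\ref{theorem:multi_oracle} supply — and then track how the (polynomially small) numerical error in the QP/LP solves propagates, so that the final guarantee is a clean $\delta$-approximation. Everything else follows directly from the standard MWU analysis with an efficient best response oracle.
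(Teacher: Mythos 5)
Your proposal matches the paper's own argument: with $n$ constant there are only $k^n$ (polynomially many) regions, the best response reduces to solving a convex QP with $n(k-1)$ linear constraints over each region (Theorem~\ref{theorem:multi_oracle}), and running Algorithm~\ref{alg:MWU} with this oracle for $\mathcal{O}(\ln n/\delta^2)$ iterations yields the ($\delta$-approximately) optimal attack in polynomial time. Your additional observation that the payoff depends on $v$ only through its region, so the game collapses to a finite matrix game over the reachable regions solvable exactly by LP, goes beyond the paper's proof (which settles for the MWU approximation) and is a correct, mild strengthening.
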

The main idea of the proof is that for a constant number of classifiers $r$, computing a best response as per Theorem \ref{theorem:multi_oracle} requires searching over only polynomially many regions $T_j$. Since MWU takes only polynomially many iterations to converge to a solution, we can compute an optimal attack efficiently. 

\textbf{Hardness of computing best responses.} If the number of classifiers is superconstant, designing an efficient algorithm to compute the optimal best response is NP-hard, even for binary linear models.
\begin{theorem}
Given a set $\C$ of $n$ linear binary classifiers, a number $B$, a point $(x, y)$, a noise budget $\alpha$, and a distribution $\p$, the problem of finding a vector $v$ with $||v||_2 \leq \alpha$ such that the loss of the learner $\Mz(\p, v) = B$ is NP-complete.
\end{theorem}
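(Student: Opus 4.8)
I would prove NP-completeness in the usual two steps: first that the decision version lies in NP, and then NP-hardness via a reduction from \textsc{Subset-Sum}. The core idea is that $\Mz(\p,v)=\sum_{i:\,c_i(x+v)\neq y}\p[i]$, so asking for a perturbation whose induced loss equals \emph{exactly} $B$ is asking whether the $\p$-mass of the set of fooled classifiers can be made to hit a prescribed value — precisely a subset-sum condition, provided we can arrange the classifiers so that each fooled-set $S\subseteq[n]$ is realizable by some $\alpha$-bounded $v$.

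\textbf{Membership in NP.} The naive witness $v$ is real-valued, so instead I would certify a \emph{region}. By Lemma~\ref{lemma:characterization} and Lemma~\ref{lemma:convex_sets}, a witness is a label vector $s\in[2]^n$ with $\sum_{i:\,s_i\neq y}\p[i]=B$ (checkable in polynomial time) together with a certificate that the associated region $T_s$ — an intersection of $n$ halfspaces — meets the ball $\{v:\|v\|_2\le\alpha\}$. The latter is a feasibility problem with polynomially many linear constraints and one convex quadratic constraint; it is solvable in polynomial time, and when feasible it admits a rational solution of polynomially bounded bit-length. The only nuisance is the open/closed distinction for halfspaces when a classifier is defined by a strict inequality on its boundary, which is handled by an infinitesimal perturbation of the thresholds that does not affect feasibility. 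Hence the problem is in NP.

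\textbf{NP-hardness.} I would reduce from \textsc{Subset-Sum}: given positive integers $w_1,\dots,w_n$ and target $t$, decide whether $\sum_{i\in S}w_i=t$ for some $S\subseteq[n]$. Let $W=\sum_i w_i$ and build an instance in dimension $d=n$ with $\p[i]=w_i/W$ (a distribution with polynomial-size rational entries), $B=t/W$, point $x=0\in\R^n$ with true label $y=1$, and noise budget $\alpha=\sqrt{n}/2$; let $c_i$ be the linear binary classifier outputting label $1$ when the $i$-th coordinate is below $1/2$ and $2$ otherwise (so each $c_i$ classifies $x$ correctly). In this gadget coordinate $i$ of the perturbation controls only $c_i$: for any $v$ the fooled set is $S(v)=\{i: v_i\ge 1/2\}$ and $\Mz(\p,v)=\sum_{i\in S(v)}\p[i]$; conversely, for any $S\subseteq[n]$ the vector $v$ with $v_i=1/2$ on $S$ and $0$ elsewhere has $\|v\|_2=\sqrt{|S|}/2\le\alpha$ and fools exactly $S$. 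Therefore an $\alpha$-bounded $v$ with $\Mz(\p,v)=B$ exists iff some $S$ satisfies $\sum_{i\in S}\p[i]=B$, i.e.\ $\sum_{i\in S}w_i=t$. The reduction is clearly polynomial, so the problem is NP-hard, and combined with NP membership it is NP-complete.

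\textbf{Main obstacle.} The hardness direction is routine once one spots the subset-sum structure and the axis-aligned gadget; the step demanding the most care is NP membership, because the sought object is a real vector and there are $2^n$ candidate regions. This is exactly where I would lean on the convex-polytope characterization of Lemmas~\ref{lemma:characterization} and~\ref{lemma:convex_sets}: the witness is the polynomial-size label vector $s$, and verification reduces to a polynomial-size convex feasibility check, using the standard fact that such systems have solutions of polynomial bit-complexity.
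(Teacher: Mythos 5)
Your proposal is correct and takes essentially the same route as the paper: a reduction from Subset-Sum using axis-aligned binary classifiers so that each coordinate of $v$ independently controls whether one classifier is fooled, with $\p$ proportional to the Subset-Sum weights and the point placed so that every fooled subset is realizable within the noise budget. Your NP-membership argument (label-vector witness plus a polynomial-size convex feasibility check with bounded bit-complexity) is in fact more careful than the paper's, which simply verifies a given $v$ directly, but the substance of the proof is the same.
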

The proof relies on the geometric characterization developed in Section \ref{sec:characterization} and is deferred to the Appendix. Given the hardness of computing best responses, we now develop an appropriate convex relaxation of the problem and introduce an alternative optimization method using projected gradient descent. Furthermore, we identify a set of natural conditions under which this new approach is guaranteed to return the optimal best response.
\vspace{-5pt}
\subsection{Best Responses via Convex Relaxations}
\vspace{-5pt}

Computing a best response to multiple classifiers is hard when the number of classifiers is super-constant, since for any given $\varepsilon>0$ we can construct instances where no $\varepsilon$-perturbation can succeed in fooling all classifiers.  In such a case, our reduction implies that no known algorithm can do better than exhaustively searching over exponentially-many intersections of decision boundaries.  But if a region where all classifiers are fooled within the noise budget exists -- and our experiments show that this is often the case -- we can efficiently find near optimal solutions using a convex relaxation. 
  

To describe this approach, recall from our characterization in Section \ref{sec:characterization}, that computing a best response is equivalent to searching over a finite number of regions $T_j$, each defined according to the underlying predictions of the learner's classifiers on points in the set. For a given point $(x,y)$ and noise budget $\varepsilon$, we say that a region $T_j$ is a \emph{feasible misclassification set} if there exists a noise vector $v$ s.t. $||v||_2 \leq \varepsilon$ and $x+v \in T_j$ but $c_i(x+v) \neq y$ for all $c_i \in \C$.  Figure \ref{fig:randomization} (c) illustrates a feasible misclassification set -- a region inside the noise budget where all classifiers predict the wrong label. 

Our main algorithmic approach to compute best responses for a super-constant number of classifiers is to apply projected gradient descent to a weighted sum of appropriately chosen loss functions. For binary classifiers $c_i$, with labels in $\{\pm1\}$, predictions are made according to the rule: $c_i(x) = \sign(\<w_i, x\> + b_i)$. Given a point $(x,y)$ and a distribution $\p$ selected by the learner, we attempt to solve the optimization problem outlined in 
\eqref{eq:best_response} by running PGD on a weighted sum of \emph{reverse hinge losses}, $f(v) = \sum_{i=1}^n \p[i] \lr(c_i, x + v, y)$, over $v$ in the $\ell_2$ ball of radius $\varepsilon$. The reverse hinge loss has the property that it is 0 if and only if $x+v$ is misclassified by the classifier $c_i$.\footnote{To preserve the guarantees of MWU, we slightly modify the reverse hinge loss so that its range is constrained to lie in the interval [0,1] by multiplying by a constant. See proof of Theorem \ref{theorem:revhinge} in the Appendix for details.} We now prove that such an approach computes the optimal best response if a feasible misclassification set exists:
\begin{equation}
\ell_{r}(c_i, x + v, y) \define \max \{y( \<w_i, x + v\> + b_i), 0\} 
\end{equation}

\begin{theorem}
\label{theorem:revhinge}
	 For any noise budget $\varepsilon>0$, precision parameter $\delta>0$, and distribution $\p$ over $\C$, running  projected gradient descent for $\mathcal{O}(\varepsilon^2 / \delta^2)$ iterations on $f(v) = \sum_{i=1}^n \p[i] \lr(c_i, x + v, y)$ returns a solution $v_t$ such that $f(v_t) - f(v^\star) \leq \delta$, where $v^\star$ is the global minimum of $f$. Furthermore, if there exists a feasible misclassification set under $\varepsilon$, then it also holds that $f(v_t) - f(v_{BR}) \leq \delta$, where $v_{BR} = \textsc{best response}(\p, \varepsilon, \Mz)$ 
\end{theorem}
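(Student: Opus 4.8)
The plan is to treat the two assertions separately: the $\mathcal{O}(\varepsilon^2/\delta^2)$-iteration convergence of projected gradient descent on $f$, and the identification of that approximate minimizer with an approximate best response once a feasible misclassification set is assumed to exist.

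For the convergence claim, I would first check that $f$ satisfies the hypotheses of the standard projected subgradient descent analysis. Each $\lr(c_i, x+\cdot, y) = \max\{y(\langle w_i, x+\cdot\rangle + b_i), 0\}$ is a pointwise maximum of an affine map and $0$, hence convex, and its subgradients have norm at most $\|w_i\|_2$ (or $\|w_i\|_2$ divided by the rescaling constant that forces the range into $[0,1]$, per the footnote); so $f = \sum_{i=1}^n \p[i]\,\lr(c_i,x+\cdot,y)$ is convex and $G$-Lipschitz with $G \le \max_i \|w_i\|_2$ up to that constant. Since we minimize $f$ over the Euclidean ball of radius $\varepsilon$ — a compact convex set, on which the minimum $v^\star$ is therefore attained — projected subgradient descent started at the origin with step size $\eta = \varepsilon/(G\sqrt{T})$ gives, for the averaged (equivalently, best-so-far) iterate $v_t$, the textbook bound $f(v_t) - f(v^\star) \le \varepsilon G / \sqrt{T}$. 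Taking $T = \mathcal{O}(\varepsilon^2 G^2/\delta^2) = \mathcal{O}(\varepsilon^2/\delta^2)$, with the Lipschitz/rescaling constant absorbed into the $\mathcal{O}(\cdot)$, yields $f(v_t) - f(v^\star) \le \delta$.

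For the second claim, the engine is the defining property of the reverse hinge loss recalled in the text: $\lr(c_i, x+v, y) = 0$ exactly when $c_i(x+v) \ne y$. If a feasible misclassification set exists under $\varepsilon$, then by definition there is a vector $v^\dagger$ with $\|v^\dagger\|_2 \le \varepsilon$ and $c_i(x+v^\dagger) \ne y$ for every $c_i \in \C$, whence $f(v^\dagger) = 0$; since $f$ is a nonnegative combination of nonnegative functions, $f \ge 0$ everywhere, so $f(v^\star) = 0$. It remains to show $f(v_{BR}) = 0$. The vector $v^\dagger$ also has $\Mz(\p, v^\dagger) = \sum_i \p[i]\cdot 1 = 1$, and $\Mz(\p,\cdot) \le 1$ identically, so the maximizer $v_{BR}$ of $\Mz(\p,\cdot)$ over the budget ball satisfies $\Mz(\p, v_{BR}) = 1$; as this is a convex combination $\sum_i \p[i]\,\lz(c_i, x+v_{BR}, y)$ equal to its maximum possible value $1$, every classifier in the support of $\p$ must have $\lz(c_i, x+v_{BR},y)=1$, i.e. $c_i(x+v_{BR}) \ne y$, hence $\lr(c_i,x+v_{BR},y) = 0$; classifiers outside the support of $\p$ contribute weight $0$ to $f$ anyway. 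Thus $f(v_{BR}) = 0 = f(v^\star)$, and part one gives $f(v_t) - f(v_{BR}) = f(v_t) - f(v^\star) \le \delta$.

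The steps above are all routine; the one place to be careful — not a real obstacle — is the boundary behaviour of $\sign(\cdot)$ at $0$, which governs whether ``$\lr(c_i,x+v,y)=0$'' and ``$c_i(x+v)\ne y$'' coincide exactly or only up to a measure-zero set. I would fix a tie-breaking convention for $\sign$ so the stated property of $\lr$ holds verbatim; this is harmless because a feasible misclassification set is defined via strict misclassification of all classifiers. The only other bookkeeping is propagating the $[0,1]$-rescaling constant from the footnote through the Lipschitz bound, which is absorbed into the $\mathcal{O}(\varepsilon^2/\delta^2)$.
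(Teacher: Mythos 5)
Your proposal is correct and follows essentially the same route as the paper: convexity and Lipschitzness of the weighted reverse hinge loss, the standard $RL/\sqrt{T}$ projected (sub)gradient bound over the $\varepsilon$-ball, and then $f(v^\star)=f(v_{BR})=0$ when a feasible misclassification set exists. You merely spell out details the paper leaves implicit (that $\Mz(\p,v_{BR})=1$ forces every supported classifier to misclassify, and the $\sign$ tie-breaking convention), which is fine.
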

In our experiments, we build upon this theoretical result and verify that this method of approximating best responses is effective even in cases where no feasible misclassification set exists. Moreover, given that it returns the optimal solution in the convex case, it serves as a well-principled approach for extending our framework to settings where the optimization problem is nonconvex.
\vspace{-3pt}
\section{Attacking a Set of Neural Networks}
\vspace{-3pt}
\label{sec:deep_learning}
When the learner's set of classifiers consists of neural networks, computing a best response for the adversary still involves a search over finitely many regions $T_j$ as per Lemma 2. However, given their nonlinear decision boundaries, designing an exact algorithm to find the perturbation $v$ of minimum norm such that a point $x+v$ lies in a particular $T_j$ is  intractable since the regions are now nonconvex. 

To compute best responses for this domain, we follow the same pattern as in the previous section and design algorithms by solving a surrogate optimization problem which is computationally efficient and whose analog in the convex setting is guaranteed to be optimal. In particular, we approximate a best response on a set of neural networks by running projected gradient descent on a weighted sum $f(v) = \sum_{i=1}^n \p[i] \lut(c_i, x + v, y)$ of \emph{untargeted reverse hinge losses}, $\lut$.\footnote{We also use this approach to attack linear multiclass models. Given $k$ classes, decisions are made according to $c_i(x) = \argmax_{j \in [k]} c_{i,j}(x)$, where $c_{i,j}(x) = \<w_{i,j}, x\> + b_{i,j}$}  For a neural network $c_{i}$ and input $x$, we define $c_{i,j}(x)$ to the be $j$th logit of the classifier. Predictions are made according to the rule: $c_i(x) = \argmax_j c_{i,j}(x)$.
\begin{equation}
	\lut(c_i, x+v, y) \define  \max\big\{2(\frac{1}{1 + e^{-z}} - .5), 0\big\};  \quad z\define c_{i,y}(x+v)  - \max_{j\neq y}c_{i,j}(x+v)
\end{equation}
We compute attacks on a set of neural networks via Algorithm \ref{alg:MWU} by relaxing the game to be played using the payout function $\Mut$ which we define below. Since the untargeted reverse hinge loss is bounded to [0,1], and is 0 if and only if the point $x+v$ is misclassified, running PGD on a weighted sum of reverse hinge losses can then  be seen as attempting to maximize the function $\Mut(\p,\cdot)$.
\begin{equation}
	\Mut(\p, \q) \define 1 - \E_{v \sim \q, c \sim \p} \left[ \lut(c, x + v, y) \right]
\end{equation}
Although we cannot prove guarantees in the deep learning case, this method directly generalizes ideas we showed were optimal in the linear setting. Like other works that follow this design principle \cite{deepfool, decoupling}, we find that our principled algorithm yields state-of-the-art results in practice.

\vspace{-5pt}
\section{Experiments}
\vspace{-3pt}
\label{sec:experiments}

We evaluate our framework for optimal attacks on a series of image classification tasks. First, we validate our theory for linear classifiers by considering both binary and multiclass experiments on MNIST. Afterwards, we evaluate our approach on ImageNet using deep neural networks. In both cases, we see how our algorithms significantly outperform current methods.

\textbf{Evaluation metrics.} As per our discussion in Section \ref{sec:model}, we evaluate attacks on a set of classifiers according to the  minimum $0\textrm{-}1$ loss (maximum accuracy) they induce across the entire set. Given a deterministic or randomized adversarial attack, $\q$, we measure $\min_{c_i \in C} \Mz(c_i,\q)$. We summarize the strength of a noise algorithm across an entire data set by computing attacks individually for each point and report the average minimum loss of the learner across all points. Lastly, to highlight the difference between fooling classifiers on average and robustly optimizing across all models, we also compute $\Mz(\tilde{\p},\q)$ where $\tilde{\p}$ is the uniform distribution over $\C$.

\textbf{Baselines.} We consider a variety of other methods with which to compare our approach. (i) \emph{Ensemble:} Given a set $\C$, we ensemble $c_i \in \C$ by averaging their predictions and attack the ensemble classifier as a way of generating noise vectors that fool the underlying models; (ii) \emph{Best Individual:} we generate attacks for each model individually, evaluate them on the entire set, and choose the best one; (iii) \emph{Oracle}: we compute a best response to the uniform distribution over classifiers using the oracles we introduce, but do not boost attacks by running MWU for multiple rounds.
\vspace{-3pt}
\subsection{Evaluating Optimal Attacks on Linear Classifiers}
\vspace{-3pt}

\textbf{Experimental setup.} For linear classifiers, we train two sets of 5 linear SVM classifiers on MNIST, one for binary (digits 0 and 1) and another for multiclass (first 3 classes, MNIST 0-2). To ensure that the decision boundaries are sufficiently different, we randomly zero out 75\% of the dimensions of the training set for each classifier. Hence, each model has weight parameters that are nonzero on  a random subset of features. All classifiers achieve test accuracies above 97\%. For our experiments, we randomly select 100 points from each test dataset that are correctly classified by all models.

Using our characterization, we can compute exact margins of each point to the decision boundary (See Appendix for exact computation as well as further details on experimental setup). If we select a noise budget $\varepsilon$ smaller than the minimum margin, then it is impossible to induce any misclassification. If $\varepsilon$ is larger than the max margin, then feasible misclassification sets exist for all points and we are guaranteed to fool all models as per Theorem \ref{theorem:revhinge}. Hence, we select noise budgets in between the min and max margin. In particular, we set $\varepsilon$ to 2.3 and 1.3 for binary and multiclass, respectively. 

\begin{figure}[t!]
\begin{center}
\includegraphics[width=.2\textwidth]{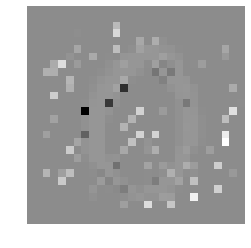} 
\includegraphics[width=.2\textwidth]{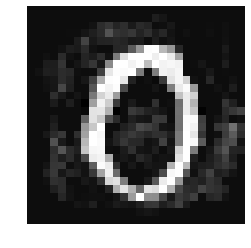} 
\includegraphics[width=.2\textwidth]{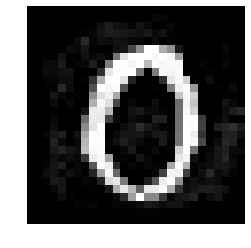} 
\includegraphics[width=.2\textwidth]{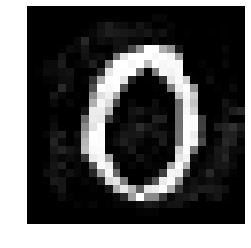} 
\end{center}
\vspace{-13pt}
\caption{Comparison of noise needed to induce similar misclassification across multiclass MNIST classifiers. From left to right: \emph{Best Individual}, \emph{Ensemble}, \emph{Oracle}, \emph{MWU-Oracle}.} 
\label{figure:noise_budget_comparisons}
\vspace{14pt}
\vspace{-31pt}
\end{figure}

We evaluate our framework for optimal noise by running MWU using both the exact oracle described in Theorem \ref{theorem:multi_oracle} (MWU-Oracle) as well as the approximate oracle using PGD on a weighted sum of losses (MWU-PGD). For binary models, MWU-PGD refers to running PGD for 40 iterations on a sum of reverse hinge losses, while for multiclass, PGD is run on untargeted reverse hinge losses. For the case of MWU-PGD, we compute noise solutions by running Algorithm \ref{alg:MWU} on the relaxed version of the game with payoff function $\Mut$ (multiclass) and $M_r$ (binary) as described in Section \ref{sec:deep_learning}.\footnote{We add box constraints to the algorithms to ensure perturbed data points remain valid images in the range [0,1]. See Appendix for further details on the experimental setup and update rule for projected gradient descent.}

To compute the baselines, since the class of linear models is convex, we compute an equal weights ensemble by combining the weight vectors $w_i$ and biases $b_i$ for all $c_i \in \C$  (e.g $w_{ensemble}$$=\frac{1}{n}\sum_{i=1}^n w_i$). We generate noise for the ensemble as well as for each individual model, by computing the theoretically optimal attack against a single linear classifier. This corresponds to the vector returned by the exact best response oracle for when $\C$ consists of a single model, scaled to have $\ell_2$ norm equal to $\varepsilon$. 

\textbf{Results.} We present our results for multiclass in Table \ref{table:multi} and illustrate the convergence of MWU in Figure  \ref{fig:mwu_convergence}. Results for binary classifiers are presented in the Appendix. First, we note that the \emph{Oracle} comparison, MWU-Oracle run for only a single iteration, does significantly better than all other baselines. Since all classifiers have equal weights, this baseline finds the optimal \emph{deterministic} attack, that is the noise vector which induces misclassification across the largest subset of models. 

Second, as seen in Figure \ref{fig:mwu_convergence}, running MWU for several iterations significantly improves the quality of the resulting noise. As per our theoretical analysis, the noise solution at the end of MWU constitutes the optimal \emph{randomized} attack, hence the difference in max accuracy between the first and last round of MWU-Oracle indicates the exact gap between the optimal deterministic and randomized attack.

To further compare across methods, in Figure \ref{figure:noise_budget_comparisons} we present the amount of distortion needed so that each algorithm induces maximum accuracy across classifiers comparable to that induced by MWU-Oracle. Since the classifiers have sparse features, noise generated against any individual model is not effective in fooling other classifiers (entries are nonzero for on mostly disjoint set). Therefore, the \emph{Best Individual} baseline requires significantly more noise than other methods. To match the performance of MWU-Oracle, the \emph{Ensemble} baseline required $60\%$ more noise, while \emph{Oracle} required only a $20\%$ increase in the noise budget. We choose to present this comparison for the MNIST case, since for ImageNet, the noise needed to fool classifiers cannot be visually perceived. 
\vspace{-15pt}
\subsection{Robust Attacks against a Set of Neural Networks}

\begin{figure*}[t!]
\centering

\includegraphics[width=0.48\linewidth]{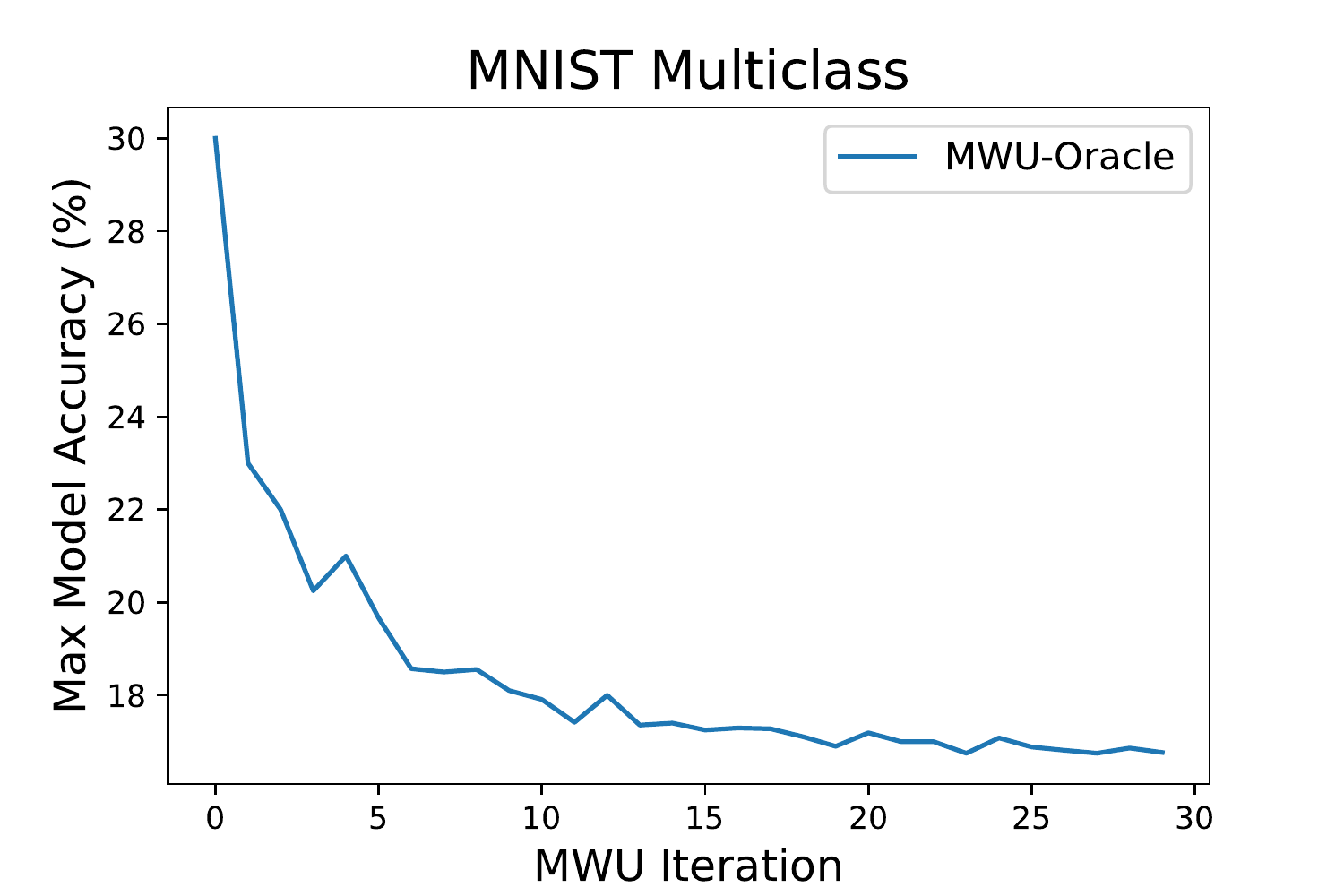}
\includegraphics[width=0.48\linewidth]{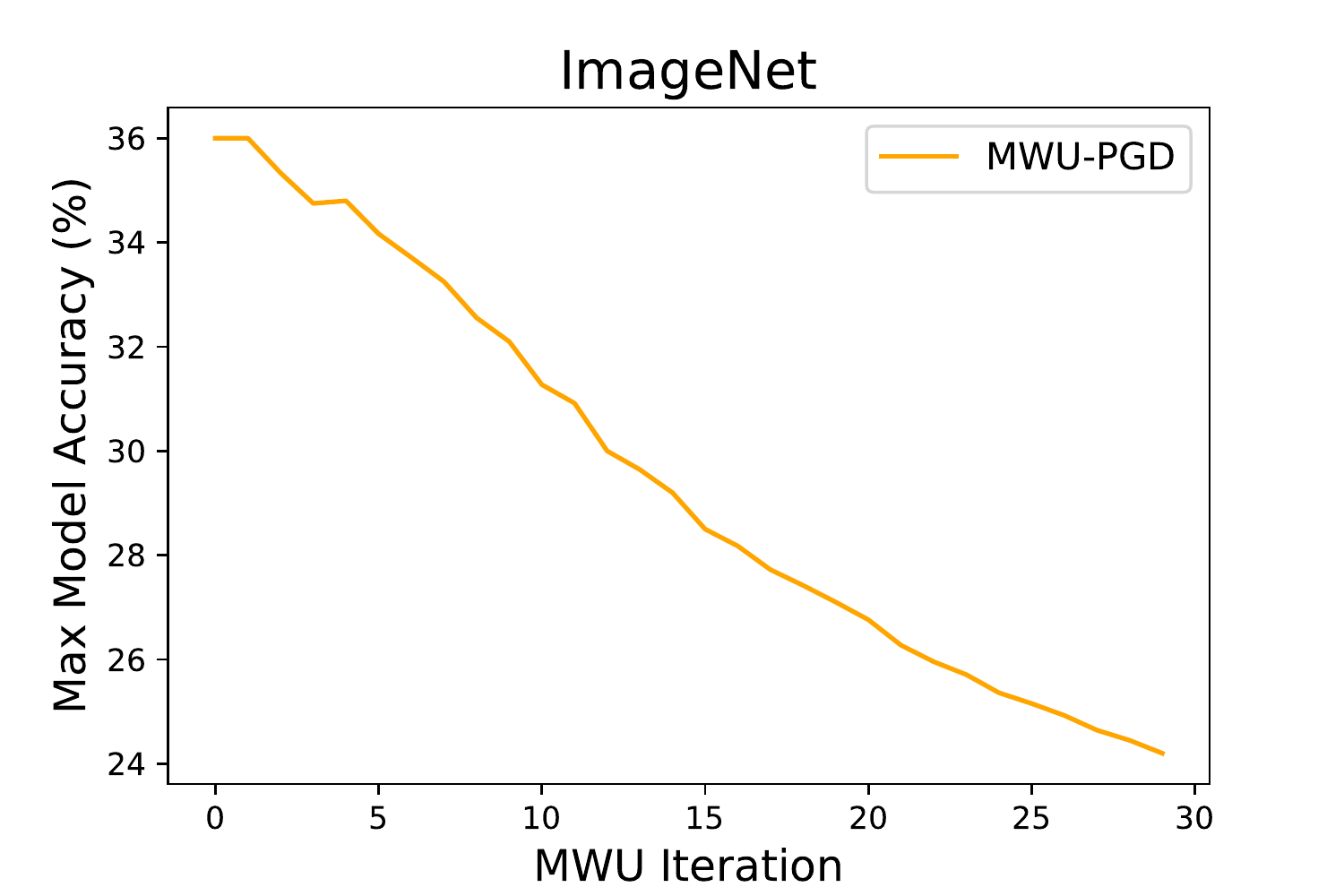}
\caption{Convergence of MWU for optimal attacks. For both MNIST multiclass models as well as neural networks on ImageNet, running MWU for several iterations significantly boosts the quality of adversarial noise. As seen in Table \ref{table:multi}, our methods significantly outperform other approaches.}
\label{fig:mwu_convergence}
\vspace{-12pt}
\end{figure*}

\begin{table}[b]
\vspace{-14pt}
\begin{center}
\caption{Results on linear multiclass models on MNIST (left) and deep networks on ImageNet (right). Entries describe the mean and max accuracies of classifiers under a particular noise algorithm.}
\vspace{14pt}
\label{table:multi}
\begin{small}
\begin{sc}
\begin{tabular}{lcccr}
\toprule
Noise Algorithm & Mean & Max\\
\midrule
Ensemble & 31.4\% & 55\%\\
Best Individual & 80\% &  100\% \\
Oracle & 12\% & 30\% \\
MWU - PGD & 34.6\% & 52\% \\
MWU - Oracle & \textbf{13.4\%} & \textbf{16.8}\%\\
\bottomrule
\end{tabular}
\end{sc}
\end{small}
\quad
\begin{small}
\begin{sc}
\begin{tabular}{lcccr}
\toprule
Noise Algorithm & Mean & Max \\
\midrule
Ensemble & 26.2\% & 55\% \\
Best Individual & 70.2\% & 99\% \\
Oracle & 16.2\% & 36\% \\
MWU - PGD & \textbf{15.12} \% & \textbf{24.2}\% \\
\bottomrule
\end{tabular}
\end{sc}
\end{small}
\end{center}
\end{table}

\textbf{Experimental setup.} For deep learning, we downloaded 5 pretrained ImageNet models with different architectures from the torchvision library: ResNet18, ResNet50, VGG13, VGG19 with batch norm, and DenseNet161.\footnote{Model accuracies may be found on the pytorch \href{https://pytorch.org/docs/stable/torchvision/models.html}{website}. Code for all experiments may be found \href{https://github.com/jcperdomo/robust_attacks/settings}{here}.} As before, we randomly select 100 images from the validation set that are correctly classified by all models. While we can no longer precisely calculate margins for each point, we choose noise budgets $\varepsilon$ so that the perturbed images remain visually indistinguishable from the originals as seen in  Appendix \ref{sec:more_experiments}. In particular, we perform our experiments $\varepsilon$ equal to 0.8.\footnote{If we divide $\varepsilon$ by the input dimension (224x224x3), this amounts to around $5\mathrm{\times}10^{-6}$ per channel.}

As discussed in Section \ref{sec:deep_learning}, we compute attacks against neural networks by running Algorithm \ref{alg:MWU} using the payout function $\Mut$. We approximate best responses for this modified loss by running PGD for 40 iterations on a weighted sum of untargeted reverse hinge losses, $\sum_{i=1}^n \p[i] \lut(c_i, x + v, y)$. As before, we clip solutions to the range $[0,1]$ so that they remain valid images.

For our baselines, we generate an ensemble classifier by computing an average over the logits of different individual models. To generate adversarial examples against the ensemble as well as for each individual model, we use the Momentum Iterative Method \cite{momentummethod} which won first place in the NIPS 2017 Adversarial Attacks Competition \cite{adversarialattackscomp}.  In addition to experimenting with the hyper parameters chosen by the authors in their original paper ($t\textrm{=}5$ iterations, decay factor $\mu\textrm{=}1$, and step size of $\varepsilon / T$), we also search over neighboring values and report the best results.
	
\textbf{Results.} Our results for deep neural networks mimic those of linear classifiers and further demonstrate how attacks developed for linear classifiers generalize well to deep learning. 
From Table \ref{table:multi}, we see that the gap between the  best response based methods we introduce and the other baselines is significant. As we motivate theoretically in Section \ref{sec:deep_learning}, approximating a best response using PGD on a weighted sum of untargeted reverse hinge losses results in a noise solution that significantly outperforms all baselines by a large margin. 

Running MWU-PGD for a single iteration, the \emph{Oracle} baseline, results in a maximum accuracy of 36\%. This demonstrates that, \emph{even without boosting} our best response oracle can be used to generate stronger attacks than existing approaches. If we do indeed boost noise using Multiplicative Weights, then we can further improve the quality of noise by an additional 12\% as seen in Figure \ref{fig:mwu_convergence}. Lastly, we highlight that the large gap between mean and max accuracies indicates that, in practice, there is a significant difference between fooling classifiers on average vs robustly minimizing the maximum accuracy. To properly attack a learner that randomizes across models, we must consider the latter.
\newpage
\bibliographystyle{abbrvnat}
\bibliography{attack_bib.bbl}

\newpage
\appendix

\begin{center}
\LARGE{\textbf{Supplementary material for \\
"Robust Attacks against Multiple Classifiers"}}
\end{center}
\begin{figure}[h]
\begin{tabular}{cccc}
\includegraphics[width=1.22in]{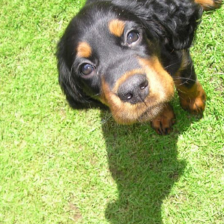} &
\includegraphics[width=1.22in]{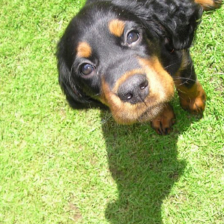} &
\includegraphics[width=1.22in]{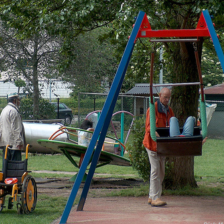} &
\includegraphics[width=1.22in]{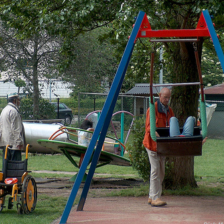} \\
\includegraphics[width=1.22in]{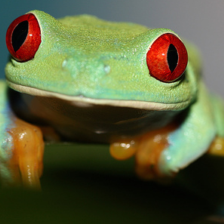} &
\includegraphics[width=1.22in]{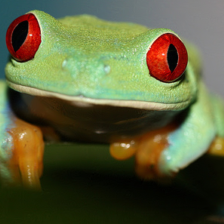} &
\includegraphics[width=1.22in]{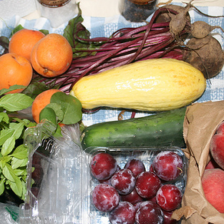} &
\includegraphics[width=1.22in]{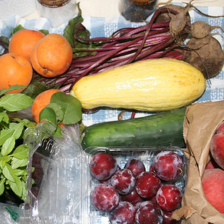} \\
\end{tabular}
\caption{Comparison of original and perturbed images from ImageNet under noise budget 0.8 in the $\ell_2$ norm. Perturbed images are on the right.}
\label{fig:pictures}
\end{figure}
\vspace{.2in}
\large{\textbf{Guide to the Appendix}}
\begin{normalsize}
\begin{itemize}
	\item In Appendix \ref{sec:more_experiments}, we present additional figures regarding the experiments on linear binary classifiers as well as further details on our experimental setup. 
	\item In Appendix \ref{appendix:ensembles}, we investigate, both empirically and theoretically, attacks on ensemble classifiers and demonstrate why they underperform our methods. 
	\item In Appendix \ref{sec:mwu_convergence}, we analyze the convergence of the Multiplicative Weights Update Algorithm as a means of approximating Nash equilibrium strategies. 
	\item In Appendix \ref{appendix:geometry}, we present the remaining proofs from our geometric characterization of best responses from Section \ref{sec:characterization}. In particular, we present proofs of Lemma \ref{lemma:convex_sets} and Theorem \ref{theorem:multi_oracle}.
	\item In Appendix \ref{appendix:hardness}, we prove that computing best responses is NP-hard.
	\item In Appendix \ref{appendix:pgd}, we prove Theorem \ref{theorem:revhinge}, and show how projected gradient descent on a relaxed version of the best response problem is guaranteed to return the optimal solution if a feasible misclassification set exists.
	\item Lastly, we present the proof of Corollary \ref{corollary:constant} in Appendix \ref{appendix:constant}
\end{itemize}

\section{Additional Experiments}
\label{sec:more_experiments}
\setcounter{page}{1}

Having presented experimental results for deep learning and linear multiclass models in the main body of the paper, in this section, we present our results on linear binary classifiers. Additionally, we include further details on our experimental setup.
\subsection{Further Details on Experimental Setup.}

In Figure \ref{figure:noise_budget_comparisons}, we progressively increase the amount of noise for each attack until it induces a maximum accuracy across classifiers that is comparable to the 16.8\% induced by MWU-Oracle for multiclass linear classifiers. Despite increasing the noise budget to 40 in the $\ell_2$ norm, the \emph{Best Individual} baseline did not manage to reduce the max accuracy below 99\%. This is explained by the fact that classifiers are sparse and have dissimilar decision boundaries as explained in the main body of the paper. For the \emph{Ensemble} and \emph{Oracle} baselines, similar levels of misclassification were achieved once we increased the noise budget to 2.1 and 1.57. Therefore, to achieve equal misclassification, we needed to use 62\% and 20\% more noise, respectively.

As discussed in Section \ref{sec:at_scale}, to guarantee convergence of MWU, we need to ensure that losses are constrained to the interval [0,1]. As previously defined, the reverse hinge loss can take values in $[0, \infty)$. However, when computing best responses, noise vectors must lie inside the Euclidean ball of radius $\varepsilon$. If we divide each individual loss by its maximum value over the $\ell_2$ ball, we can guarantee that losses are bounded to the interval $[0,1]$. The max loss is straightforward to calculate. Since pushing towards the boundary is the optimal attack, given a point $(x,y)$ the reverse hinge loss is maximized by pushing in the opposite direction (e.g $x +  \frac{\varepsilon y}{||w||_2}w$). Dividing by the max loss amounts to multiplying by a positive constant and thus preserves the convexity of the function.

\begin{figure}[t]
\label{fig:binary_convergence}
\begin{center}
\includegraphics[width=.6\linewidth]{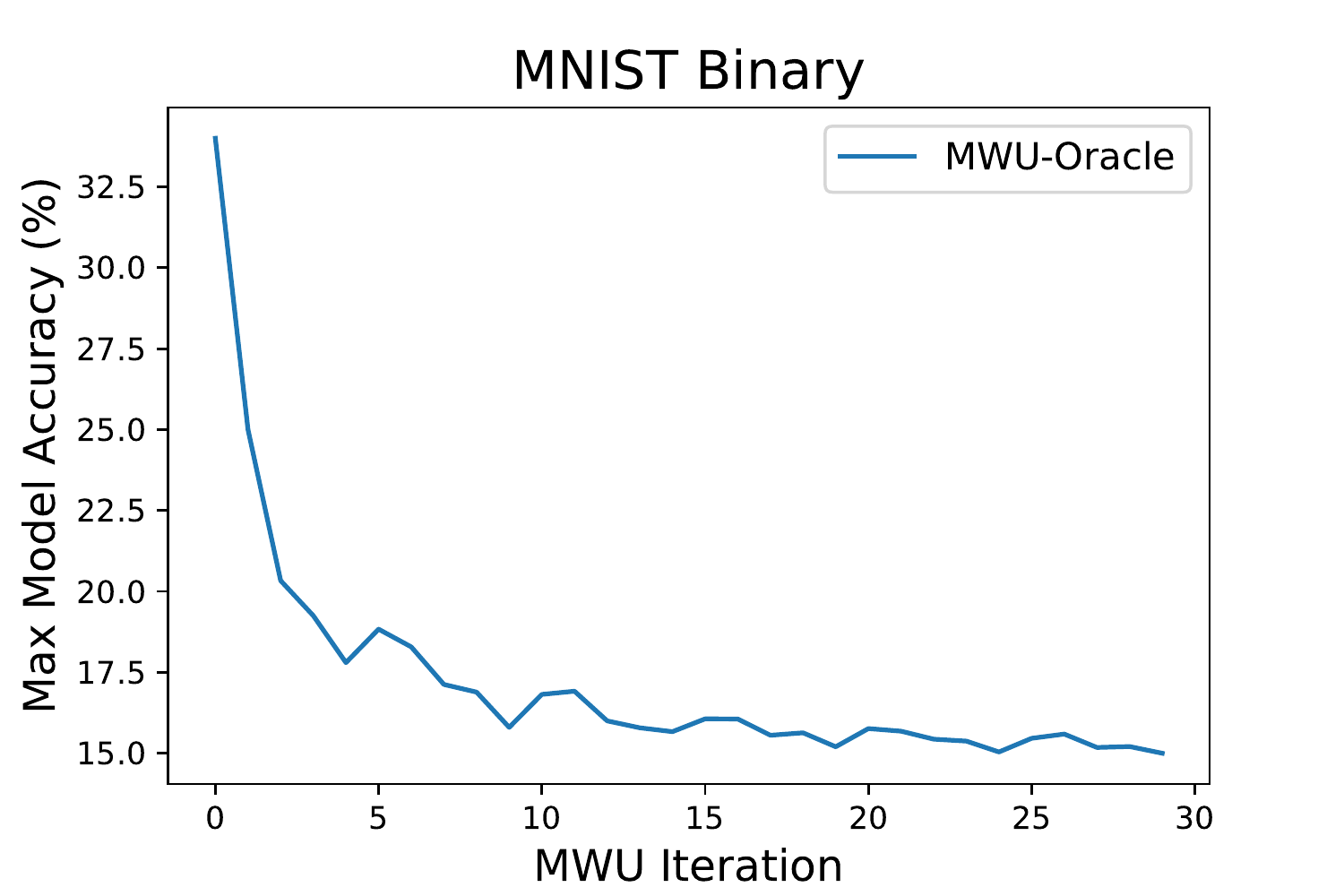}
\end{center}
\caption{Convergence of MWU on linear binary classifiers. Much like previous experiments for multiclass models and deep learning, MWU converges quickly and significantly boosts the quality of adversarial attacks.}
\end{figure}

To compute exact margins for linear classifiers, we use our exact oracle from Theorem \ref{theorem:multi_oracle}. In particular, given a single model $c\in\C$ and an example $(x,y)$, there are only $k-1$ regions $T_j$ where points are misclassified by the model. To compute the margin, the minimum distance from the point $x$ to the decision boundary, we can solve the convex program for the vector $v_j$ that pushes the point $x$ into each $T_j$, and compute the minimum length over all vectors $v_j$.

\begin{table}[b]
\vspace{-10pt}
\caption{Experimental results  on MNIST linear binary classifiers. As before, entries indicate the mean and maximum accuracy of classifiers in the set when evaluated on a particular attack.}
\vspace{5pt}
\label{table:binary_results}
\begin{center}
\begin{tabular}{lcccr}
\toprule
Noise Algorithm & Mean & Max\\
\midrule
Best Individual & 80\% &  100\% \\
Ensemble & 33.8\% & 65\%\\
Oracle & 12.8 \% & 34\% \\
MWU - PGD & 29.7\% & 42\% \\
MWU - Oracle & \textbf{13.6\%} & \textbf{15}\%\\
\bottomrule
\end{tabular}
\end{center}
\end{table}

When running projected gradient descent on a weighted sum of reverse hinge losses $f(v) = \sum_{i=1}^n \p[i] \lut(c_i, x + v, y)$, we compute updates according to the rule:
\begin{equation}
	v_{t+1} = \Pi_\varepsilon \Big(v_{t} - \eta \cdot \nabla f(v_{t}) / ||\nabla f(v_{t})||_2\Big)
\end{equation}

Where $\Pi_\varepsilon$ is the projection operator onto the $\ell_2$ ball of radius $\varepsilon$ and $\eta\mathrm{=}1.25\varepsilon / T$
. We experimented with using the traditional update rule $v_{t+1} = \Pi_\varepsilon(v_{t} - \eta \nabla f(v_{t}))$, for a small constant step size $\eta$ (e.g $\eta = 0.1$) and running for a larger number of iterations. However, we found that the practical performance between the different approaches was negligible and hence opted for the one which required fewer iterations. 

In addition to restricting vectors to the $\ell_2$ ball, as mentioned previously, we enforce box constraints so that each perturbed example remains a valid image. In particular, we clip iterates in PGD so that $x+v\in [0,1]^d$. In the case of the exact best response oracle, we augment the quadratic program with constraints of the form $0\leq x_i + v_i \leq 1$ for all $i \in [d]$.

For all our our experiments, we ran the MWU algorithm for $T\mathrm{=}30$ iterations and set the update parameter $\beta$ equal to $\sqrt{\ln |\C| / T}$ as indicated by our theoretical analysis presented in Appendix \ref{sec:mwu_convergence}.

\subsection{Experiment Results on Linear Binary Classifiers}

Similar to our experiments for multiclass classifiers, we find that MWU equipped with the best response oracle from Theorem \ref{theorem:multi_oracle} significantly outperforms all other baselines. Running the MWU algorithm for several iterations in the case of the exact oracle greatly improves the quality of the resulting noise solution. In particular, the gap between the \emph{Oracle} baseline and MWU-Oracle indicates that the maximum accuracy on the set of classifiers can be further reduced by an additional 20\% by considering the optimal randomized attack as seen in Table \ref{table:binary_results}. For MWU-PGD, contrary to the ImageNet case, we find that there is little benefit to boosting noise via MWU. 

\section{Why Ensembles Fail at White Box Attacks}
\label{appendix:ensembles}

When asked to find a noise solution that affects the performance of multiple classifiers, a natural approach one might consider is to attack the ensemble of all classifiers in that set. However, as we have seen previously, attacks that fool an ensemble do not always fool the underlying models. While ensembles have been shown to generate strong black box attacks \citep{song}, they seem to fail at generating robust white box attacks. In this section, we illustrate why this is the case.

\subsection{Understanding Ensemble Attacks Theoretically}

To understand the shortcomings of attacks on ensemble classifiers, we begin by theoretically characterizing their behavior on the simplest of settings: linear binary classifiers. Attacks on ensemble classifiers, typically consist of applying gradient based optimization methods to an ensemble model $E(\C, \p)$ made up of classifiers $c_i \in \C$ and ensemble weights $\p$. For binary classifiers, this ensemble classifier is computed by averaging the individual weight vectors as described in Section \ref{sec:experiments}, ($w_{ensemble}$$=\frac{1}{n}\sum_{i=1}^n w_i$). To find adversarial examples, we run gradient descent on a loss function such as the reverse hinge loss that is 0 if and only if the perturbed example $x' = x + v$ with true label $y$ is misclassified by the model.

Assuming $x'$ is not yet misclassified by the ensemble, the gradient of the loss function $\nabla \lr(E(\C, \p), x', y)$ is equal to $\sum_i \p[i] w_i$. This is undesirable for two main reasons:

\begin{itemize}
	\item First, the ensemble obscures valuable information about the underlying objective. If $x'$ is misclassified by a particular model $c_i$ but not the ensemble, contrary to applying PGD on a weighted sum of losses, $c_i$ still contributes $\p[i] w_i$ to the gradient and biases exploration away from promising regions of the search space;
	\item  Second, fooling the ensemble does not guarantee that the noise will transfer across the underlying models. Assuming the true label $y$ is -1, $\lr(E(\C, \p), x', y)=0$ if and only if there exists a subset $\T$ $\subseteq$ $\C$ such that: 
	\begin{equation}
		\sum_{c_t \in \T} \p[t](\<w_t, x' \> + b_t) > 0	\end{equation}
	\begin{equation}
		\sum_{c_j \in \C \setminus \T}\p[j](\<w_j,x'\> + b_j) < 0
	\end{equation}
	\begin{equation}
		\sum_{c_t \in \T} \p[t](\<w_t, x' \> + b_t) > \big|\sum_{c_j \in \C \setminus \T}\p[j](\<w_j,x'\> + b_j) \big|
	\end{equation}
	Hence, the strength of an ensemble classifier is only as good as its weakest weighted majority.
\end{itemize}

\subsection{Investigating Properties of Neural Network Ensembles using Saliency Maps}

Showing that attacking the ensemble is suboptimal in the linear case provides strong motivation as to why the method should perform poorly in the nonlinear case. However, to investigate this phenomenon further, we analyze how the decision boundaries differ between the individual classifiers and the ensemble network in the case of deep learning. Having different classification boundaries implies that attacks on one model are unlikely to affect other models as illustrated in Figure \ref{fig:randomization}.

We visualize the class boundaries of convolutional neural networks using the algorithm proposed by \citet{saliency} for generating \emph{saliency maps}. The class saliency map indicates which features (pixels) are most relevant in classifying an image to have a particular label. Therefore, they serve as one way of understanding the decision boundary of a particular model by highlighting which dimensions carry the highest weight. 

Given an input image $x$, they are defined as $\partial c_{i,j}(x) / \partial x$ where $c_{i,j}(x)$ is the $j$th logit of the network $c_i$. In the case of multichannel images, the value per pixel is defined as the maximum across all channels so as to yield a single grayscale image. Furthermore, we use the smoothed version of the saliency maps algorithm where derivatives are averaged over slightly perturbed inputs $x'= x+ v$ where $v$ is sampled from a zero-mean gaussian with small variance. 

\begin{figure}[t]
\begin{center}
	\includegraphics[width=.3\linewidth]{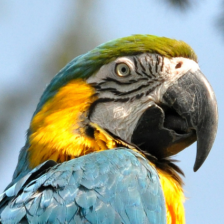}
\end{center}
\vspace{.1in}
\begin{tabular}{ccc}
\includegraphics[width=.3\linewidth]{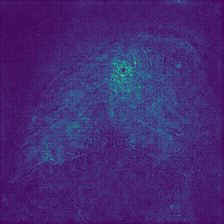} &
\includegraphics[width=.3\linewidth]{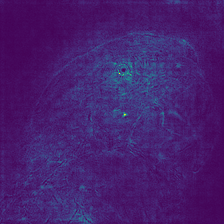} & 
\includegraphics[width=.3\linewidth]{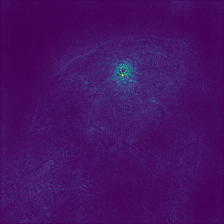} \\
\includegraphics[width=.3\linewidth]{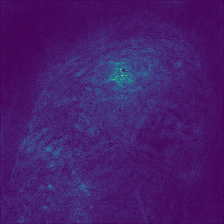} &
\includegraphics[width=.3\linewidth]{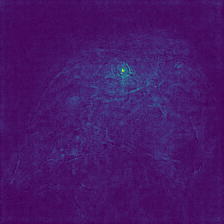} & 
\includegraphics[width=.3\linewidth]{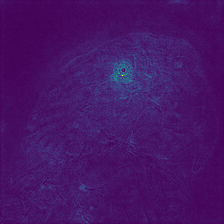} \\
\end{tabular}
\caption{Saliency Maps for ImageNet Classifiers. At the top is the true image. First row: ResNet18, ResNet50, VGG13. Second row: VGG19 with batch norm, DenseNet161, and the ensemble network. }
\label{fig:saliency_maps}
\end{figure}

In Figure \ref{fig:saliency_maps}, we see that the class saliency maps for individual models exhibit significant diversity. The ensemble of all 5 classifiers appears to contain information from all models, however, certain regions that are of central importance for individual models are relatively less prominent in the ensemble saliency map. This observation is in line with our earlier analysis of how ensemble classifiers obfuscate key information regarding the decision boundary of individual models. 

\section{Analysis of Multiplicative Weight Updates for Zero-Sum Games}
\label{sec:mwu_convergence}

\begin{theorem}
\label{theorem:mwu_convergence}
Given an error parameter $\delta$, after $\mathcal{O}(\frac{\ln n}{\delta^2})$ iterations, Algorithm \ref{alg:MWU} returns distributions $\p^\star$, $\q^\star$ s.t:
\begin{align*}
	\min_{c_i \in \C} M_\ell(c_i, \q^\star) & \geq \lambda - \delta \\
	 \underset{v \in \R^d, \; ||v||_2 \leq \varepsilon}{\max} M_\ell(\p^\star, v) & \leq \lambda + \delta
\end{align*}
where $\lambda = \underset{\q}{\min} \; \underset{\p}\max \; M_\ell(\p,\q)$ is the  value of the game .
\end{theorem}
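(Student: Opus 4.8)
The plan is to run the classical Freund--Schapire analysis of multiplicative weights for zero-sum games, specialized to the present game. I will proceed in three steps: (1) show that $\lambda$ is well defined and record two consequences of the minimax theorem; (2) prove a regret bound for the Hedge-style update in Algorithm~\ref{alg:MWU}; (3) combine the two, crucially using that the oracle returns an \emph{exact} best response.

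\textbf{Step 1 (minimax value).} Although the adversary's pure-strategy set $\{v \in \R^d : \|v\|_2 \le \varepsilon\}$ is infinite, Lemma~\ref{lemma:characterization} shows that for any fixed $\p$ the map $v \mapsto M_\ell(\p, v)$ takes at most $k^n$ distinct values, one per region $T_j$; hence the adversary loses nothing by restricting to a finite set of representative vectors, one per region intersecting the ball. With both players then having finite pure-strategy sets and $M_\ell(\p,\q)$ bilinear in $(\p,\q)$, von Neumann's minimax theorem applies and gives $\min_\p \max_\q M_\ell(\p,\q) = \max_\q \min_\p M_\ell(\p,\q) =: \lambda$. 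Two facts follow: since the minimum of a linear function over the simplex is attained at a vertex, $\min_\p M_\ell(\p,\q) = \min_{c_i \in \C} M_\ell(c_i, \q)$ for every $\q$; and consequently $\max_{\|v\|_2 \le \varepsilon} M_\ell(\p, v) \ge \lambda$ for every $\p$, while $\min_{c_i \in \C} M_\ell(c_i, \q) \le \lambda$ for every $\q$.

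\textbf{Step 2 (regret bound).} Track the potential $\Phi_t = \sum_{i} w_t^i$ with $w_1^i = 1$ and $w_{t+1}^i = w_t^i (1-\beta)^{M_\ell(c_i, v_t)}$, so that $\p_t[i] = w_t^i / \Phi_t$. Since $M_\ell$ is valued in $[0,1]$ (this is precisely why the reverse-hinge losses were rescaled), $(1-\beta)^x \le 1 - \beta x$ for $x \in [0,1]$ gives $\Phi_{t+1} \le \Phi_t \big(1 - \beta M_\ell(\p_t, v_t)\big)$, hence $\Phi_{T+1} \le n \exp\!\big(-\beta \sum_{t=1}^T M_\ell(\p_t, v_t)\big)$. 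Bounding $\Phi_{T+1} \ge w_{T+1}^i = (1-\beta)^{\sum_t M_\ell(c_i, v_t)}$ for each $i$, taking logarithms, and using $-\ln(1-\beta) \le \beta + \beta^2$ for $\beta \le \tfrac12$ together with $M_\ell(c_i,v_t)\le 1$, a rearrangement yields, for every $i$,
\[
 \frac1T \sum_{t=1}^T M_\ell(\p_t, v_t) \;\le\; \frac1T \sum_{t=1}^T M_\ell(c_i, v_t) \;+\; \frac{\ln n}{\beta T} \;+\; \beta .
\]
Choosing $\beta = \sqrt{\ln n / T}$ makes the error term $2\sqrt{\ln n / T}$, which is at most $\delta$ once $T = \mathcal{O}(\ln n / \delta^2)$ (and then $\beta \le \tfrac12$, so the inequality $-\ln(1-\beta)\le\beta+\beta^2$ used above is valid).

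\textbf{Step 3 (combine).} Because $v_t = \textsc{best response}(\p_t, \varepsilon, M_\ell)$ is an exact maximizer, $M_\ell(\p_t, v_t) = \max_{\|v\|_2\le\varepsilon} M_\ell(\p_t, v) \ge \lambda$ by Step~1, so the left-hand side of the displayed bound is $\ge \lambda$. With $\q^\star = \mathrm{Unif}\{v_1,\dots,v_T\}$ we have $M_\ell(c_i, \q^\star) = \tfrac1T\sum_t M_\ell(c_i, v_t)$; minimizing over $i$ in the display then gives $\lambda \le \min_{c_i} M_\ell(c_i,\q^\star) + \delta$, the first claim. For the second, with $\p^\star = \mathrm{Unif}\{\p_1,\dots,\p_T\}$, linearity gives $M_\ell(\p^\star, v) = \tfrac1T\sum_t M_\ell(\p_t, v)$ for every $v$, so
\[
 \max_{\|v\|_2 \le \varepsilon} M_\ell(\p^\star, v) \;\le\; \frac1T\sum_{t=1}^T \max_{\|v\|_2\le\varepsilon} M_\ell(\p_t, v) \;=\; \frac1T\sum_{t=1}^T M_\ell(\p_t, v_t),
\]
again using exactness of the oracle. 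By the Step~2 display this is at most $\min_{c_i} M_\ell(c_i,\q^\star) + \delta \le \lambda + \delta$, invoking $\min_{c_i} M_\ell(c_i,\q^\star) \le \lambda$ from Step~1. This gives the second claim.

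\textbf{Main obstacle.} The argument is essentially the textbook one; the points that require care are (i) justifying that $\lambda$ is well defined despite the infinite adversary action set --- resolved by the finite-region structure of Lemma~\ref{lemma:characterization}; (ii) the reliance on an \emph{exact} best-response oracle so that $M_\ell(\p_t, v_t) = \max_{\|v\|_2\le\varepsilon} M_\ell(\p_t, v)$ (an $\eta$-approximate oracle would degrade both bounds to $\lambda \pm (\delta + \eta)$); and (iii) tracking the multiplicative-weights constants for the $(1-\beta)^{(\cdot)}$ update with $[0,1]$-valued losses, which is the reason for the rescaling. None of these is a genuine difficulty.
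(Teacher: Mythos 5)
Your proof is correct and follows essentially the same route as the paper's: the standard Freund--Schapire analysis, lower-bounding $\frac{1}{T}\sum_t M_\ell(\p_t,v_t)$ by $\lambda$ via exactness of the oracle and upper-bounding it via the multiplicative-weights regret guarantee, then reading off the two claims for the uniform averages $\p^\star,\q^\star$. The only differences are cosmetic: you prove the regret bound from scratch with the potential argument where the paper cites Kale's Corollary~1, and you close the learner-side bound via $\min_{c_i} M_\ell(c_i,\q^\star)\le\lambda$ rather than plugging in the learner's optimal strategy $\tilde{\p}$ as the paper does, while also making explicit the averaging step $\max_v M_\ell(\p^\star,v)\le\frac{1}{T}\sum_t M_\ell(\p_t,v_t)$ and the well-definedness of $\lambda$, which the paper leaves implicit.
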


\begin{proof}
The following analysis draws heavily upon the work of \citet{FreundSchapire, FreundSchapireGamePlaying}, yet the precise treatment follows that of \citet{kale}.

By guarantees of the Multiplicative Weights algorithm, we have that for any distribution $\p$ over $\C$ with losses in $[0,1]$, for $\beta \leq 1/2$ the following relationship holds (Corollary 1, \citet{kale}):
\begin{equation*}
\sum_{t=1}^T M_\ell(\p_t, v_t) \leq (1 + \beta)\sum_{i=1}^TM_\ell(\p, v_t) + \frac{\ln n}{\beta}
\end{equation*}
If we divide by $T$, and note that $M(\p, v) \leq 1$, and $M(\p_t, v_t) \geq \lambda$ for all $t$ (due to oracle guarantees), we have that for any distribution $\p$:
\begin{equation*}
	\lambda^\star  \leq \frac{1}{T} \sum_{i=1}^TM_\ell(\p_t, v_t) 
	  \leq \frac{1}{T}\sum_{i=1}^T M_\ell(\p, v_t) + \beta + \frac{\ln n}{\beta T}
\end{equation*}
If we let $\p = \tilde{\p}$ be the optimal strategy for the min player, then $M_\ell(\tilde{\p}, v) \leq \lambda^\star$ for any $v$. Furthermore, if we set $\beta = \frac{\delta}{2}$ and $T = \lceil \frac{4 \ln n}{\delta^2} \rceil$ we get that:
\begin{equation*}
\lambda^\star  \leq \frac{1}{T} \sum_{i=1}^TM_\ell(\p_t, v_t) \leq \frac{1}{T}\sum_{i=1}^T M_\ell(\tilde{\p},  v_t) + \delta \leq \lambda^\star + \delta
\label{eq:adversaryeq}
\end{equation*}
Therefore $\p^\star$, the uniform distribution over $\p_1, \dots, \p_t$ is an approximately optimal solution for the learner. 

For the adversary, we know from the previous equations that the following holds for any strategy $\p$ played by the learner:
\begin{equation*}
	\lambda^\star \leq \frac{1}{T} \sum_{i=1}^TM_\ell(\p_t, v_t) \leq \frac{1}{T}\sum_{i=1}^T M_\ell(\p, v_t) + \delta
\end{equation*}
If we set $\q^\star$ to be the distribution that assigns equal weight to  vectors in the set $\{v_1, \dots, v_T\}$ then we have that for any distribution $\p$:
\begin{equation*}
	\lambda^\star \leq \frac{1}{T} \sum_{i=1}^TM_\ell(\p_t, v_t) \leq M_\ell(\p, \q^\star) + \delta
\end{equation*}
Hence $\q^\star$ is an approximately optimal strategy:
\begin{equation*}
	\lambda^\star - \delta \leq M_\ell(\p, \q^\star)
\end{equation*}
\end{proof}

\section{Proofs of Geometric Characterization}
\label{appendix:geometry}

In this section, we present the proofs of our theoretical results from Section 3 that were omitted from the main body of the paper. In particular, we present the proofs of Lemma \ref{lemma:convex_sets} and Theorem \ref{theorem:multi_oracle}.

\begin{customlemma}{2}
Given a point, label pair $(x,y) \in \R^d \times [k]$ selecting a distribution $\p$ over a set $\C$ of $n$ linear classifiers, partitions the input space $\R^d$ into $k^n$ disjoint and convex sets $T_j$. Furthermore, $\R^d \setminus \bigcup_j T_j$ is a set of measure zero.
\end{customlemma}
\begin{proof}
From Lemma \ref{lemma:characterization}, we know that the sets are disjoint since two sets $T_j,T_{j'}$ must differ in at least one index and classifiers can only predict a single label for each point. To show that these sets are convex, consider points $x_1, x_2 \in T_j$ and an arbitrary classifier $c_i \in \C$ s.t. $c_i(x) = z$ for all $x \in T_j$. If we let $x' = \alpha x_1 + (1-\alpha )x_2$ where $\alpha \in [0,1]$ then the following holds for all $j \in [k]$ where $j \neq z$:
\begin{align*}
	c_{i,z}(x') & = \<w_{i,z},  \alpha x_1 + (1 - \alpha)x_2\> +  b_{i,z} \\
	& = \alpha \<w_{i,z},  x_1\> +   \alpha b_{i,z}  +  (1 - \alpha)\<w_{i,z},  x_2\> +  (1 - \alpha)b_{i,z}\\
	& > \alpha \<w_{i,j},  x_1\> +  \alpha b_{i,j}  +  (1 - \alpha)\<w_{i,j},  x_2\> +  (1 - \alpha)b_{i,j} \\
	& = c_{i,j}(x')
\end{align*}
Lastly, the set $\R^d \setminus \bigcup_i T_i$ is equal to the set of points $x$ where there are ties for the maximum valued classifier. This set is a subset of the set of points $\mathcal{K}$ that lie at the intersection of two hyperplanes:
\begin{equation}
	\R^d \setminus \bigcup_i T_i \subset \{x| \exists \; c_{i,k}, c_{j,l} \text{ s.t } c_{i,k}(x) = c_{j,l}(x)\}
\end{equation}
Finally, we argue that $\mathcal{K}$ has measure zero. For all $\varepsilon > 0, x \in \mathcal{K}$, there exists an $x'$ such that $||x - x'||_2 < \varepsilon$ and $x' \notin \mathcal{K}$ since the intersection of two distinct hyperplanes is of dimension two less than the overall space. Therefore, $\R^d \setminus \bigcup_i T_i$ must also have measure zero.
\end{proof}
\subsection{Best Response Oracle for Linear Classifiers}

For our analysis of Theorem \ref{theorem:multi_oracle}, we focus on the case where $\C$ consists of ``one-vs-all" classifiers. In the following subsection, we show how our results can be generalized to other methods for multilabel classification by reducing these other approaches to the ``one-vs-all" case. Given $k$ classes, a ``one-vs-all" classifier $c_i$ consists of $k$ linear functions $c_{i,j}(x) = \<w_{i,j}, x\> + b_{i,j}$ where $j \in [k]$. On input $x$, predictions are made according to the rule $c_i(x) = \argmax_j c_{i,j}(x)$.
\begin{customtheorem}{1}
For linear classifiers, implementing an exact \oracle reduces to the problem of minimizing a quadratic function over a set of $k^n$ convex polytopes.
\end{customtheorem}
\begin{proof}
From the previous lemmas, we know that the expected loss of the learner, $\Mz(\p,\cdot)$, can assume only finitely many values, each of which is associated with a particular convex region $T_j \subset \R^d$ . Therefore, to compute a best response, we can iterate over all regions and choose the  perturbation with $\ell_2$ norm less than $\varepsilon$ that is associated with the region of highest loss. To find points in a set $T_j$, each associated with label vector $s_j$, we solve for the vector $v$ of minimal $\ell_2$ norm such that $x+v \in T_j$. This can be done by minimizing a quadratic function over a convex set:
\begin{equation}
\label{eq:multi_prog}
\begin{aligned}
& \underset{v \in \R^d}{\text{min}}
& & ||v||_2^2 \\
& \text{subject to} & & c_1(x + v) = s_{j,1} \\
& & &. \dots\\
& & & c_n(x+ v) = s_{j,n}
\end{aligned}
\end{equation}
Each constraint in the program above can be expressed as $k-1$ linear inequalities. For a particular $z \in [k], c_i \in \C$ we write $c_i(x+v) = z$ as $c_{i,z}(x+ v) > c_{i,j}(x + v)$ for all $j \neq z$. 
\end{proof}
\vspace{-.2in}

\subsection{Beyond ``One-vs-All" Linear Classification}

Here we extend the results from our analysis of linear classifiers to other methods for multilabel classification. In particular, we show that any ``all-pairs" or multivector model can be converted to an equivalent ``one-vs-all" classifier and hence all of our results also apply to these other approaches.

\textbf{All-Pairs.} In the ``all-pairs" approach, each linear classifier $c$ consists of $\binom{k}{2}$ linear predictors $c_{i,j}$ trained to predict between labels $i,j \in [k]$. As per convention, we let $c_{i,j}(x) = -c_{j,i}(x)$. Labels are chosen according to the rule:
\begin{equation*}
\label{eq:all_pairs}
	c(x) = \argmax_{i \in [k]} \sum_{j \neq i} c_{i,j}(x)
\end{equation*}
Given an ``all-pairs" model $c$, we show how it can be transformed into a ``one-vs-all" model $c'$ such that $c(x)$=$c'(x)$ for all points $x \in \R^d$:
\begin{align*}
	 c(x) &= \argmax_{i \in [k]} \sum_{j \neq i} c_{i,j}(x) \\
	&= \argmax_{i \in [k]} \sum_{j \neq i} \<w_{i, j}, x \> + b_{i, j}\\
	&= \argmax_{i \in [k]} \<w'_i, x\> + b'_i\\
	&= \argmax_{i \in [k]} \sum_{j \neq i} c'_i(x) = c'(x)
\end{align*}
\textbf{Multivector.} Lastly, we extend our results to multilabel classification done via class-sensitive feature mappings and the multivector construction by again reducing to the ``one-vs-all" case. Given a function $\Psi: \R^d \times [k] \rightarrow \R^n$, labels are predicted according to the rule:
\begin{equation}
\label{eq:multivector_rule}
	c(x) = \argmax_{y \in [k]} \langle w, \Psi(x, y) \rangle
\end{equation}
While there are several choices for the $\Psi$, we focus on the most common, the multivector construction:
\begin{align*}
	\Psi(x,y) & = \big[ \underbrace{0, \dots, 0}_{\in \R^{(y-1)(d+1)}}, \underbrace{x_1, \dots, x_n, 1}_{\in \R^{d + 1}}, \underbrace{0, \dots, 0}_{\in \R^{(k-y)(d+1)}} \big] \\
	w & = \big[ w_1, \dots, w_k \big] \text{ where } w_i \in \R^{d+1} \; \forall i
\end{align*}
This in effect ensures that \eqref{eq:multivector_rule} becomes equivalent to that of the ``one-vs-all" approach:
\begin{equation*}
c(x) = \argmax_{i \in [k]} \langle w_i, x\rangle
\end{equation*}

\subsection{Extensions to $\ell_\infty$ norm}
\label{sec:ell_infty}

While we focus on the $\ell_2$ norm as the main metric with which to gauge the magnitude of adversarial noise, our results can be readily extended to function with the $\ell_\infty$ norm. Most of our results follow directly without modification, but for those that don't we present extensions here:

For linear classifiers, we can extend the result of Theorem \ref{theorem:multi_oracle} for the case of the $\ell_\infty$ norm by slightly modifying the convex program. Given a label vector $s_j$ and a point $(x, y)$ we solve for:
\begin{equation}
\label{eq:multi_prog_infty}
\begin{aligned}
& \underset{v \in \R^d}{\text{min}}
& & 0 \\
& \text{subject to} & & c_i(x + v) = s_{j,i} & \forall i \in [k] \\
& & & v_i \leq \varepsilon & \forall i \in [d]
\end{aligned}
\end{equation}
To extend our approximate best responses methods to the $\ell_\infty$ case, we can alter the projection step of gradient descent to constrain noise to the $\ell_\infty$ ball. The solution space remains convex and hence our theoretical guarantees still hold.

\section{Hardness of Computing a Best Response}
\label{appendix:hardness}
\begin{customtheorem}{2}
Given a set $\C$ of $n$ linear binary classifiers, a number $B$, a point $(x, y)$, noise budget $\varepsilon$, and a distribution $\p$, the problem of finding a vector $v$ with $||v||_2 \leq \varepsilon$ such that the loss of the learner $\Mz(\p, v) = B$ is NP-complete.
\end{customtheorem}
\begin{proof}
	We can certainly verify in polynomial time that a vector $v$ induces a loss of $B$ simply by calculating the 0-1 loss of each classifier. Therefore the problem is in NP.

	To show hardness, we reduce from Subset Sum. Given a set of $n$ numbers $P = \{p_1, \dots p_n\}$ and a target number $B$, the goal of Subset Sum is to find a subset $U \subseteq P$ such that the sum of the elements in $U$ equals $B$.\footnote{Without loss of generality, we can assume that instances of Subset Sum only have values in the range $[0,1]$. We can reduce from the more general case by simply normalizing inputs to lie in this range.} Given an instance of Subset Sum, we determine our input space to be $\R^n$, the point $x$ to be the origin, the label $y=-1$, and the noise budget $\varepsilon=1$. Next, we create $n$ binary classifiers of the form $c_i(x) = \< e_i, x \>$ where $e_i$ is the $i$th standard basis vector. We let $p_i$ be the probability with which the learner selects classifier $c_i$.\footnote{We can again normalize values so that they form a valid probability distribution.}

	We claim that there is a subset that sums to $B$ if and only if there exists a region $T_j \subset \R^n$ on which the learner achieves loss $B$. Given the parameters of the reduction, the loss of the learner is determined by the sum of the probability weights of classifiers $c_i$ such that $c_i(x+v) = +1$ for points $x + v \in T_j$. If we again identify sets $T_j$ with sign vectors $s_j \in \{\pm 1\}^n$ as per Lemma \ref{lemma:characterization}, there is a bijection between the sets $T_j$ and the power set of $\{p_1, \dots, p_n\}$. A number $p_i$ is in a subset $U_j$ if the $i$th entry of $s_j$ is equal to $+1$. 
	
	Lastly, we can check that there are feasible points within each set $T_j$ and hence that all subsets within the original Subset Sum instance are valid. Each $T_j$ corresponds to a quadrant of $\R^n$. For any $\varepsilon > 0$ and for any $T_j$, there exists a $v_j$ with $\ell_2$ norm less than $\varepsilon$ such that $x + v_j \in T_j$. Therefore, there is a subset $U_j$ that sums to $B$ if and only if there is a region $T_j$ in which the learner achieves loss equal to $B$.
\end{proof}

\section{Analysis of Projected Gradient Descent as a Best Response}
\label{appendix:pgd}

\begin{customtheorem}{3}
\label{theorem:revhinge}
	 For any noise budget $\varepsilon>0$, precision parameter $\beta>0$, and distribution $\p$ over $\C$, running  projected gradient descent for $\mathcal{O}(\varepsilon^2 / \beta^2)$ iterations on $f(v) = \sum_{i=1}^n \p[i] \lr(c_i, x + v, y)$ returns a solution $v_t$ such that $f(v_t) - f(v^\star) \leq \beta$, where $v^\star$ is the global minimum of $f$. Furthermore, if there exists a feasible misclassification set under $\varepsilon$, then it also holds that $f(v_t) - f(v_{BR}) \leq \beta$, where $v_{BR} = \textsc{best response}(\p, \varepsilon, \Mz)$ 
\end{customtheorem}

\begin{proof}
The reverse hinge loss is convex since it is the max of 0 and a linear function. The objective $f$ is thus also convex since it is a weighed sum of convex where all the weights are positive. In addition to being convex, the function is also Lipschitz. To do show that it is Lipschitz, since the function is convex, we only need to bound the norm of the gradient:
\begin{equation*}
	\begin{split}
	f(v_2) &\geq f(v_1) + \<\nabla f(v_1), v_2 - v_1\>\\
f(v_1) \textrm{-} f(v_2) &\leq \<\nabla f(v_1), v_1\textrm{-}v_2\> \leq ||\nabla f(v_1)||\cdot ||v_1 \textrm{-} v_2||
	\end{split}
\end{equation*}
Reversing the roles of $v_1$, $v_2$ we get that:
$$f(v_2) - f(v_1) \leq ||\nabla f(v_2)||\cdot ||v_2 - v_1||$$
Therefore $|f(v_1) - f(v_2)| \leq L\cdot ||v_1 - v_2||$ where $L$ is a bound on the norm of the gradient. The objective function $f(v) = \sum_{i=1}^n \p[i]\max \{y( \<w_i, x + v\> + b_i), 0\}$ has a max gradient of 
$$\sum_{i=1}^n \p[i]\cdot y \cdot w_i \leq \sum_{i=1}^n \p[i]  ||w_i||$$  
Since all the classifiers are just hyperplanes, we can normalize all the $w_i$ to have norm 1. Furthermore, since $\sum_{i=1}^n \p[i] = 1$, we get that the max norm of the gradient is 1. Hence the function is 1 Lipschitz.

We can now apply standard theorems for the convergence of projected gradient descent for convex Lipschitz functions. In particular, we use Theorem 3.2 from \citet{bubeck} which states that for convex, $L$-Lipschitz functions over the domain of a Euclidean ball with radius $R$, the following relationship holds with respect to the global optimimum, $v^\star$:
\begin{equation*}
f\Big(\frac{1}{T}\sum_{t=1}^T v_t\Big) - f(v^\star) \leq \frac{RL}{\sqrt{T}}
\end{equation*}
\end{proof}

To ensure that the average iterate is within $\beta$ of the optimum, setting $R=\varepsilon, L=1$ and solving for $T$, we get that $T$ must equal $\varepsilon^2 / \beta^2$. Furthermore, if a feasible misclassification set exists, then $f(v^\star) = f(v_{BR}) = 0$ and we get that the average iterate must be within $\beta$ of the optimal solution. 

\section{Efficiently Computing Optimal Attacks against Sets of Constant Size}
\label{appendix:constant}
\begin{customcorollary}{1}
When the number of linear classifiers is constant in the size of the input dimension $d$, the best response strategy can be computed in polynomial time.
\end{customcorollary}
\begin{proof}
When the number of classifiers is a small constant $r$, from Lemma \ref{lemma:characterization} it follows that there are only $k^r$ (polynomially many) regions $T_j$. As seen in Theorem \ref{theorem:multi_oracle}, to compute a best response, for each region, we solve a quadratic program over $d$ variables with $n(k-1)$ linear constraints. Since the run time of solving a quadratic program is polynomial in the number of variables and constraints \cite{quadprog}, and we are solving only polynomially many programs, the entire run time of computing a best response is polynomial. 
\end{proof}

\end{normalsize}

\end{document}